\relax
\documentclass[letterpaper]{article} 
\usepackage{arxiv}  
\usepackage{times}  
\usepackage{helvet}  
\usepackage{courier}  
\usepackage[hyphens]{url}  
\usepackage{graphicx} 
\urlstyle{rm} 
\usepackage{natbib}  
\usepackage{caption} 
\DeclareCaptionStyle{ruled}{labelfont=normalfont,labelsep=colon,strut=off} 
\frenchspacing  
\setlength{\pdfpagewidth}{8.5in}  
\setlength{\pdfpageheight}{11in}  
%
\usepackage{algorithm}

%
\usepackage{newfloat}
\usepackage{listings}
\lstset{%
	basicstyle={\footnotesize\ttfamily},
	numbers=left,numberstyle=\footnotesize,xleftmargin=2em,
	aboveskip=0pt,belowskip=0pt,%
	showstringspaces=false,tabsize=2,breaklines=true}
\floatstyle{ruled}
\newfloat{listing}{tb}{lst}{}
\floatname{listing}{Listing}
\pdfinfo{
} 
\usepackage{times}  
\usepackage{helvet} 
\usepackage{courier}  
\usepackage[hyphens]{url}  
\usepackage{graphicx} 

\usepackage{hyperref}
\usepackage{caption}
\usepackage{graphicx}
\usepackage{subfig}
\usepackage{tikz}
\usetikzlibrary{tikzmark, shapes.arrows,arrows,automata,positioning}
\usepackage{url}
\usepackage{array}
\usepackage{booktabs}

\usepackage{algorithm}
\usepackage{multirow}
\usepackage[]{algpseudocode}
\usepackage{amsmath,amsthm,amssymb}

\usepackage{thmtools,thm-restate}

\newenvironment{mythm}[1]
  {\innercustomthm}
  {\endinnercustomthm}

\newtheorem{definition}{Definition}

\newtheorem{assumption}{Assumption}

\newtheorem{theorem}{Theorem}
\newtheorem{prop}{Proposition}

\newcommand{\ie}{\textit{i}.\textit{e}., }

\DeclareMathOperator*{\argmax}{arg\!\max}  
\urlstyle{rm} 
\usepackage{natbib}  
\usepackage{caption} 
\frenchspacing  
\setlength{\pdfpagewidth}{8.5in}  
\setlength{\pdfpageheight}{11in}  

\setcounter{secnumdepth}{0} 

%



\title{Enhancing Counterfactual Classification via Self-Training}

\author {
    Ruijiang Gao\footnote{This work was done while the author was an intern at IBM Research},\textsuperscript{\rm 1}
    Max Biggs, \textsuperscript{\rm 2}
    Wei Sun, \textsuperscript{\rm 3} 
    Ligong Han \textsuperscript{\rm 4} \\
}
\affiliations {
    \textsuperscript{\rm 1} University of Texas at Austin \\
    \textsuperscript{\rm 2} University of Virginia \\
    \textsuperscript{\rm 3} IBM Research \\
    \textsuperscript{\rm 4} Rutgers University \\
    ruijiang@utexas.edu, mbiggs@darden.virginia.edu, sunw@us.ibm.com, lh599@scarletmail.rutgers.edu
}

\begin{document}

\maketitle

\begin{abstract}
Unlike traditional supervised learning, in many settings only partial feedback is available. We may only observe outcomes for the chosen actions, but not the counterfactual outcomes associated with other alternatives. Such settings encompass a wide variety of applications including pricing, online marketing and precision medicine. A key challenge is that observational data are influenced by historical 
policies deployed in the system, yielding a biased data distribution. We approach this task as a domain adaptation problem and propose a self-training algorithm which imputes outcomes with 
\textit{categorical} 
values for \textit{finite} unseen actions in the observational data to simulate a randomized trial through pseudolabeling, which we refer to as Counterfactual Self-Training (CST). CST iteratively imputes pseudolabels and retrains the model. In addition, we show input consistency loss can further improve CST performance which is shown in recent theoretical analysis of pseudolabeling. We demonstrate the effectiveness of the proposed algorithms on both synthetic and real datasets.

\end{abstract}

\section{Introduction}

Counterfactual inference~\citep{pearl2000models} attempts to address a question central to many applications - \textit{What would be the outcome had an alternative action was chosen?} This includes selecting relevant ads to engage with users in online marketing~\citep{li2010contextual}, determining prices that maximize profit in revenue management~\citep{bertsimas2016power}, or designing the most effective personalized treatment for a patient in precision medicine~\citep{xu2016bayesian}. With observational data, we have access to past actions, their  outcomes, and possibly some context, but in many cases not the complete knowledge of the historical policy which gave rise to the action~\citep{shalit2017estimating}.  Consider a pricing setting in the form of targeted promotions. We might record information about a customer (context), the promotion offered (action) and whether an item was purchased (outcome), but we do not know why a particular promotion was selected. 

Unlike traditional supervised learning with full feedback, we only observe partial feedback for the chosen action in observational data,  but not the outcomes associated with other alternatives (i.e., in the pricing example, we do not observe what would occur if a different promotion was offered). In contrast to the gold standard of a randomized controlled trial,  observational data are influenced by the historical policy deployed in the system which may over or underrepresent certain actions,  yielding a biased data distribution. A naive but widely used approach is to learn a machine learning algorithm directly from observational data and use it for prediction. This is often referred to as the direct method (DM)~\citep{dudik2014doubly}. Failure to account for the bias introduced by the historical policy often results in an algorithm which has high accuracy on the data it was trained on, but performs considerably worse under different policies~\citep{shalit2017estimating}. 
For example, in the pricing setting, if historically most customers who received high promotion offers have a certain profile, then a model based on direct method may fail to produce reliable predictions on these customers when low offers are given. 
In this paper, we focus on \textbf{\textit{Counterfactual Classification (CC)}}, where the outcome of each action is categorical. For instance, one may want to estimate whether a customer will purchase an item in a pricing application or a user  clicks on an ad in online marketing. This setting is in contrast to the body of  literature on treatment effect estimation which assumes a continuous outcome~\citep{yoon2018ganite,shalit2017estimating,shi2019adapting,alaa2017bayesian}. 

To overcome the limitations of the direct method, \citet{shalit2017estimating,johansson2016learning,lopez2020cost} cast counterfactual learning as a {\em domain adaptation} problem, where the source domain is observational data and the target domain is a {\em randomized trial} whose assignment of actions follows a uniform distribution for a given context. 
The key idea is to map contextual features to an embedding space and jointly learn a representation  
that encourages similarity between these two domains, leading to better counterfactual inference. 
The embedding is generally learned by a neural network and the estimation of the domain gap is usually slow to compute. Such representation learning approaches also pose challenges for interpretable models since the learned representation often loses the semantic meaning of origin features. 
In this paper, while we also view counterfactual classification as a domain adaptation problem between observational data and an ideal randomized trial, we take a different approach - instead of estimating the domain gap between the two distributions via an embedding, we explicitly simulate a randomized trial by imputing pseudolabels for the unobserved actions in the observational data. The optimization process is done by iteratively updating the pseudolabels and a model that is trained on both the factual and the counterfactual data, as illustrated in Figure~\ref{fig:st_ex}. As this method works in a self-supervised fashion~\citep{zou2018domain,amini2002semi}, 
we refer to our proposed framework as Counterfactual Self-Training (CST).

The contributions of our paper are as follows. First, we propose a novel pseudolabeling self-training algorithm for counterfactual inference. 
To the best of our knowledge, this is the first application of a self-training algorithm using pseudolabeling for the counterfactual classification problem. In contrast to the existing methods from domain adaption on counterfactual inference, CST shares the advantage of PL methods which are data-modality-agnostic and work for a wide range of models. Second, we show incorporating a counterfactual input consistency loss can further improve CST's performance, which is consistent with recent theoretical analysis of pseudolabeling when the underlying data distribution and pseudolabeler satisfies certain assumptions such as sufficiently high labeling accuracy and the underlying data is well-separated so that the wrong pseudolabels can be denoised by close correct labels.
This finding suggests promising results for pseudolabeling based approaches in counterfactual classification problems.
Third, we present comprehensive experiments on toy data, several synthetic datasets and real datasets converted from multi-label classification tasks to evaluate our method against state-of-the-art baselines. In all experiments, CST shows competitive or superior performance against all the baselines. 
\begin{figure*}[t!]
    \centering
    \includegraphics[width=\linewidth]{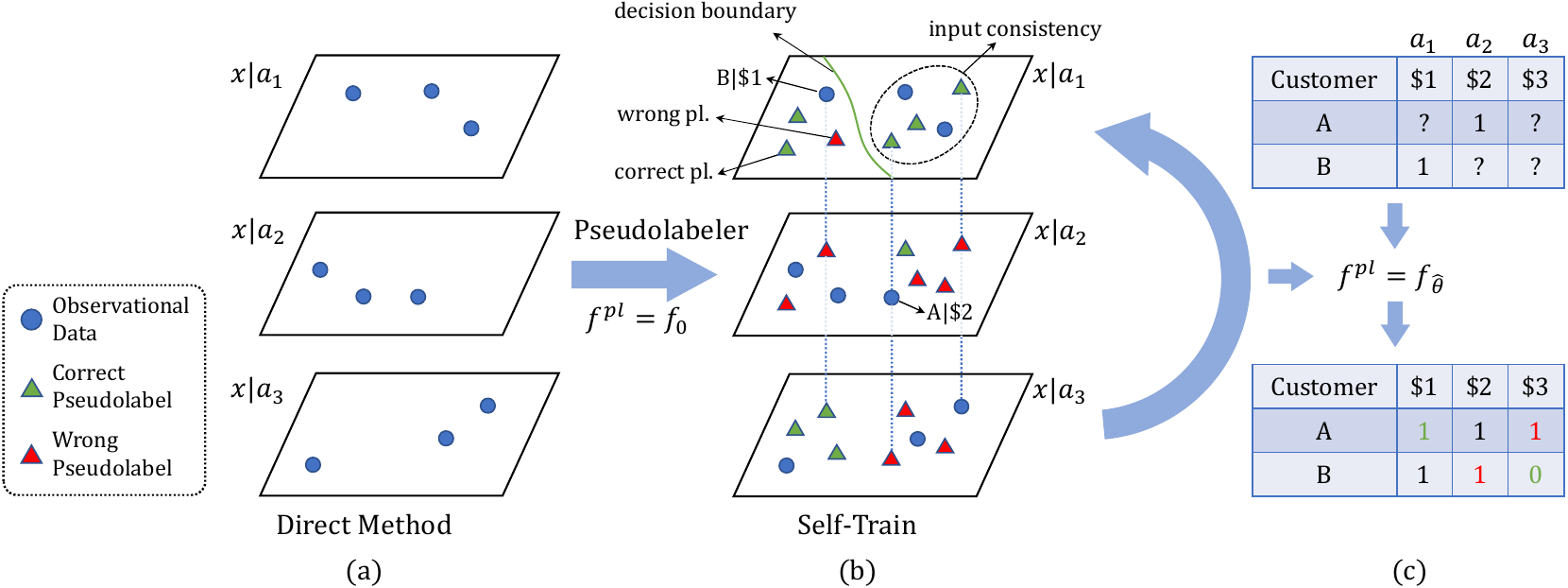}
    \caption{Illustration of the proposed Counterfactual Self-Training (CST) framework. 
    There are two sales records (observational data) shown in the table, \ie  Customer A was offered \$2 and bought an item; Customer B was offered \$1 and also bought. The question marks in the tables represent the counterfactual outcomes which we do not observe. For all these unseen counterfactual outcomes, pseudolabels which are colored in the tables are imputed by a model and are used to augment the observational data. The model is subsequently updated by training on both the imputed counterfactual data and the factual data. This iterative training procedure continues until it converges. Blue points represent factual observations. Triangles represent imputed pseudolabels. For pseudolabels, red means wrong labels and green means correct labels. The input consistency regularization correct the wrong pseudolabel with its neighbors' correct labels, resulting in a refined decision boundary.}
    \label{fig:st_ex}
\end{figure*}

\section{Related Work}

Counterfactual policy optimization and evaluation has received a lot of attention in the machine learning community in recent years~\citep{swaminathan2015counterfactual,joachims2018deep, shalit2017estimating,lopez2020cost,kallus2019classifying,kallus2018confounding,Wang2019BatchLF,gao2021human,biggs2021loss}. Most of the proposed algorithms can be divided into two categories:  counterfactual risk minimization (CRM) and direct method (DM). Both can be used together to construct  doubly robust estimators~\citep{dudik2014doubly} to further improve efficiency. 
CRM, also known as off-policy learning or batch learning from bandit feedback (BLBF), typically utilizes inverse propensity weighting (IPW)~\citep{rosenbaum1987model, rosenbaum1983central} to account for the bias in the data. ~\citet{swaminathan2015counterfactual} introduces the CRM principle with a variance regularization term derived from an empirical Bernstein bound~\citep{maurer2009empirical} for finite samples. In order to reduce the variance of the IPW estimator, ~\citet{swaminathan2015self} proposes a self-normalized estimator, while BanditNet~\citep{joachims2018deep}
utilizes the baseline technique~\citep{greensmith2004variance} in deep nets. As pointed out by~\citet{lefortier2016large},  CRM-based methods tend to struggle with medium to large action spaces in practice. Moreover, CRM-based methods generally require a known (or accurately estimated) and stochastic logging policy, along with full support on the action space.  When either one of the requirements is violated, ~\citet{sachdeva2020off, kang2007demystifying} observe that the direct method often demonstrates a more robust performance. In CRM, a continuous reward is often assumed and the objective is to either maximize policy reward or evaluate the policy reward accurately. In the counterfactual classification problem, such rewards are not well-defined and may be later defined differently by various possible downstream tasks. 

Another line of research in learning from observational data is often referred to as estimating Individualized Treatment Effects (ITE)~\citep{shpitser2012identification,forney2019counterfactual} or conditional average treatment effect (CATE), which is defined as the difference of expected outcomes between two actions, with respect to a given context. The main challenge of identifying ITE is that unlike an ideal randomized trial, observational data is biased and we do not have the access to the counterfactuals.~\citet{hill2011bayesian} uses a bayesian nonparametric algorithm to address this issue.~\citet{yoon2018ganite} propose  using generative adversarial nets to capture the uncertainty in the counterfactual distributions to facilitate ITE estimation.~\citet{johansson2016learning, shalit2017estimating} approach counterfactual inference with representation learning and domain adaptation. 
~\citet{lopez2020cost} further extends this framework to multiple treatments using Hilbert-Schmidt Independence Criterion (HSIC)~\citep{gretton2008kernel} and achieves state-of-the-art performance. The HSIC proposed in~\citet{lopez2020cost} has a computation time of at least $\mathcal{O}(N^2)$, making its training process slow. While the aforementioned methods and our approach can be viewed as extensions to the direct method, similarly to CRM methods, most of them require a specific reward function a priori and are either designed for binary treatment or cannot directly be applied in our Counterfactual Classification problem which treats outcome as a categorical variable. 
We tackle the domain adaptation problem differently by explicitly augmenting the observational data to create a simulated randomized trial via self-training. Some works develop meta-learners classified as X-, T-, S-learners~\cite{kunzel2019metalearners}, for example,~\citet{hill2011bayesian} is an instance of S-learner. Our approach is similar to X-learner in the sense that both methods use pseudolabels to generate counterfactuals. However, X-learner only considers binary actions with a continuous outcome, while CST allows multiple actions with categorical outcomes. Moreover, X-learner is one-shot  while CST is trained in an iterative fashion.  

Self-training algorithms have been widely studied in semi-supervised learning and domain adaptation problems \citep{nigam2000text, amini2002semi, grandvalet2005semi,zou2019confidence,han2019unsupervised} in various forms. \citet{grandvalet2005semi} propose using entropy regularization for semi-supervised learning as a class-overlapping measure to utilize unlabeled data. \citet{nigam2000text, amini2002semi, zou2019confidence} formulate the pseudolabel imputation as classification EM algorithm and show its convergence under proper initialization. Recent theoretical analysis~\citep{wei2020theoretical} and empirical evidence shows input consistency loss such as VAT loss~\citep{miyato2018virtual} can further improve pseudolabeling in semi-supervised learning. \citet{han2019unsupervised} points out that pseudolabel imputation can be viewed as minimizing min-entropy as a type of R\'enyi entropy $\frac{1}{1-\alpha}\log(\sum_{i=1}^n p_i^\alpha)$ when $\alpha \rightarrow \infty$, and Shannon entropy in \cite{grandvalet2005semi} is the case when $\alpha \rightarrow 1$. Self-training is also shown to be effective in semi-supervised learning for many other machine learning models besides neural networks \citep{tanha2017semi,li2008self}.
Unlike traditional self-training where the target domain is given by the problem, we propose to construct a target domain by imputing pseudolabels on all unseen actions to simulate a pseudo-randomized trial. 

\section{Problem Statement}

Following the notation in~\citet{lopez2020cost}, we use $\mathcal{X}$ to represent an abstract space and $\mathbb{P}(x)$ is a probability distribution on $\mathcal{X}$. Each sample $x = x_1, \cdots, x_n \in \mathcal{X}^n$ is drawn independently from $\mathbb{P}(x)$. $\mathcal{A}$ is the \textit{discrete} action space that a central agent can select for each sample, after applying action $a$, a \textit{categorical} outcome $r_a \in \{1,\cdots, m\}$ is revealed to the agent. In precision medicine, $\mathcal{X}$ may represent a patient cohort,  $\mathcal{A}$ refers to feasible treatments for a disease, and $r$ can be the indicator of whether a patient recovers after the treatment. Similarly, $\mathcal{X},\mathcal{A},r$ can represent visitors, ads shown and whether the visitor clicks in online marketing. 


We focus on a pricing example to illustrate our method. We use $x \in \mathcal{X}^n \sim \mathbb{P}(x)$ to denote a customer. Let $\mathcal{A}$ represent finite price options a central agent can offer to customers. After offering price $a\in \mathcal{A}$, the agent observes the response from the customer $r_a \in \{0, 1\}$, \ie either a 1 (buy) or a 0 (no-buy). Our objective is to learn a function $f(x,a)$ by maximizing the log likelihood
\begin{align}
\mathbb{E}_{x\sim\mathbb{P}(x)}\sum_{a\in\mathcal{A}}\sum_{r} \mathbb{P}(r|x,a)\log{f(r|x,a)},    
\end{align}
where $\mathbb{P}(r|x,a)$ is the true probability of $r$ given $x,a$ 
and the historical data is generated by $\pi_0(a|x)$, a stochastic assignment policy~\citep{shalit2017estimating,lopez2020cost}. The estimation task is often referred to as demand  estimation~\citep{wales1983estimation}, which is critical for many downstream decisions such as  inventory optimization and revenue management~\citep{kok2007demand,mcgill1999revenue}. This is in contrast to CRM-based methods which use the final reward as its objective to learn a policy $\pi(a|x)$ that maximizes  $\mathbb{E}_{x\sim\mathbb{P}(x), p\sim\pi(a|x)} \mathcal{R}(r, x)$~\citep{swaminathan2015counterfactual}, where $\mathcal{R}(\cdot)$ is a pre-defined reward function. 

With observational data, the counterfactual outcome is not always identifiable. We use Rubin's potential outcome framework and assume consistency, overlap and ignorability which is a sufficient condition for identifying potential outcomes from historical data~\citep{imbens2009recent,pearl2017detecting}. Here we formally present the ignorability assumption~\citep{rubin2005causal, shalit2017estimating}:
\begin{assumption}[Ignorability]
Let $\mathcal{A}$ be action set, $x$ is context (feature), $r_a|x$ is observed outcome for action $a\in\mathcal{A}$ given context $x$, $r_a \perp\!\!\!\perp a | x, \forall a \in \mathcal{A}$.
\end{assumption}
In other words, we assume there is no unobserved confounders. This condition generally cannot be made purely based on data and requires some domain knowledge.

\section{Algorithm}
In this section, we introduce  the Counterfactual Self-Training (CST) algorithm, which can be viewed as  an extension of the direct method via domain adaptation. Unlike existing methods using representation learning, we propose a novel self-training style algorithm to account for the bias inherent in the observational data. 
The self-training algorithm works in an iterative fashion: First, after training a classifier $f(x,a)$ on a source dataset, pseudolabels are created by the best guess of $f$. Next, the model is trained on a target dataset, and the trained model is used to generate new pseudolabels. This idea is illustrated in Figure \ref{fig:st_ex}. 

To formulate the counterfactual learning problem as a  domain adaptation problem, 
the observational data are viewed as data sampled from a source distribution $\mathcal{D}_S = \mathbb{P}(x)\pi(a|x)$. The target domain is a controlled randomized trial on the same feature distribution to ensure a uniformly good approximation on all actions. Our goal is to transfer observational data from the source domain to a simulated pseudo-randomized trial via self-training. 
More specifically, CST tries to optimize the following objective:

\begin{align}
\label{eqn:obj}
    \underset{\theta, \hat{r}}{\min}~
    \mathcal{L}_\text{CST}=&\underbrace{\sum_{i=1}^N \Big(
    \sum_{k=1}^m - r_{i,a_i,k} \log f_{\theta}(r_{i,a_i,k}|x_i,a_i)
    }_{\mathcal{L}_{src}} + \nonumber\\
    &\sum_{a\in \mathcal{A}\setminus a_i}\sum_{k=1}^m- \hat{r}_{i,a,k} \log f_{\theta}(\hat{r}_{i,a,k}|x_i,a_i) \Big)\\ 
    & s.t. \quad \hat{r}_{i,a} \in \Delta^{(m-1)} \nonumber 
\end{align}

We use $\hat{r}$ to represent imputed pseudolabels, $r_{i,a,k}$ is the $k$-th entry of $r_{i,a}$, $\hat{r}_{i,a}$ is defined on a simplex and $r_{i,a}$ is one hot encoded. Note when counterfactual outcomes are available, this objective corresponds to the true negative log likelihood function for multi-label classification without partial feedback. The first term $\mathcal{L}_{src}$  in Equation~\ref{eqn:obj} corresponds to the loss used in direct method, defined over the factual data alone. Meanwhile, the second term refers to the loss defined over the imputed counterfactual data. In other words, in order to get a good model across all actions, we jointly optimize $\hat{r}, \theta$ which represent the true negative log likelihood when all counterfactuals are observed. 
Such objectives can be minimized by the EM algorithm via iteratively updating model and imputing pseudolabels (the pseudolabel is the minimum solution for $\hat{r}$ defined on the simplex~\citep{zou2019confidence}). 
To accomplish this, 
we first train an initial classifier $f_0(x,p)$ on observational data, then impute pseudolabels on all unseen actions from the observation data with  $\hat{r}_{i,a} = \Bar{f}_0(x_i,a)$, where the $k$-th entry is imputed by
$$
\Bar{f}(r_{i,a,k}|x_i,a) = 1 \text{ if } k = \argmax_c  f(r_{i,a,c}|x_i,a) \text{ else } 0.
$$
However, it has been shown that such a  procedure may suffer from local minima~\citep{grandvalet2005semi} or over-confident wrong pseudolabels~\citep{zou2019confidence}.~\citet{wei2020theoretical} shows when the underlying data distribution and pseudolabeler satisfies expansion assumption (See Definition 3.1 and Assumption 4.1, 3.3 in~\citet{wei2020theoretical}), self-training algorithms with input-consistency are able to achieve improvement from pseudolabeling (Theorem 4.3 in~\citet{wei2020theoretical}). Intuitively, the condition states that there needs to be many correct neighbors around the errors made by pseudolabeler so that correct labels can refine the decision boundary. 
Note that when imposing the same assumptions 4.1 and 3.3 on the underlying data distribution and error set of our pseudolabeler, we have the same improvement guarantee via self-training. We formalize this statement in Theorem~\ref{thm:main}, the formal definitions, assumptions and proof are included in Appendix due to the space limit. 

\begin{theorem}
\label{thm:main}
If the pseudolabeler has sufficiently good classification performance on all classes across actions, and the underlying data distribution satisfies expansion assumption with good separation (close samples have a high likelihood of having same labels), then minimizing a weighted sum of input consistency loss and pseudolabeling loss can improve pseudolabeler's classification performance. 
\end{theorem}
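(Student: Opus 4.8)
The plan is to prove Theorem~\ref{thm:main} by reduction to the self-training guarantee of \citet{wei2020theoretical} (their Theorem~4.3), after lifting the counterfactual classification problem to an augmented input space so that the multi-action, categorical-outcome structure is absorbed into a single classification task. Concretely, define the augmented feature $\tilde x = (x,a) \in \mathcal{X}\times\mathcal{A}$ and the pseudo-randomized population $\tilde{\mathcal{D}} = \mathbb{P}(x)\,\mathrm{Unif}(\mathcal{A})$. By the ignorability assumption, $r_a \perp\!\!\!\perp a \mid x$, so the conditional law $\mathbb{P}(r\mid x,a)$ appearing in Equation~(1) is exactly the outcome distribution a true randomized trial would reveal; this legitimizes treating $\tilde{\mathcal{D}}$ together with $\mathbb{P}(r\mid x,a)$ as the target domain. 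Let $y^\star(\tilde x) = \argmax_k \mathbb{P}(r=k\mid x,a)$ be the ground-truth label and $\tilde{\mathcal{X}}_k = \{\tilde x : y^\star(\tilde x)=k\}$ the ground-truth classes on $\tilde{\mathcal{D}}$. The key point making the lift valid is that the input-consistency perturbations act on $x$ only and never change the action, so the neighborhood graph $\mathcal{N}(\cdot)$ on $\tilde{\mathcal{X}}$ is a disjoint union over $a\in\mathcal{A}$ of the per-action neighborhood graphs, and no expansion or separation structure leaks between actions.

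Next I would transcribe the two structural hypotheses of Theorem~\ref{thm:main} into the exact form required by \citet{wei2020theoretical}. The phrase ``the underlying data distribution satisfies the expansion assumption with good separation'' becomes: (i) $\tilde{\mathcal{D}}$ restricted to each ground-truth class $\tilde{\mathcal{X}}_k$ satisfies $(c,\hat a)$-expansion in the sense of their Assumption~3.3, with expansion constant $c>3$ (their Definition~3.1); and (ii) the ground-truth labeling is robust up to a small boundary, i.e.\ the mass of $\tilde x$ whose neighborhood $\mathcal{N}(\tilde x)$ meets two different classes is at most $\mu$ (their Assumption~4.1). The phrase ``the pseudolabeler has sufficiently good classification performance on all classes across actions'' becomes: the hard pseudolabeler $\bar f_0$ from the algorithm satisfies $\mathrm{Err}_{\tilde{\mathcal{D}}}(\bar f_0) := \mathbb{P}_{\tilde x\sim\tilde{\mathcal{D}}}\big(\bar f_0(\tilde x)\neq y^\star(\tilde x)\big) \le \beta$ for $\beta$ small relative to $c$ and $\mu$, where the expectation over the uniform action distribution is precisely the ``across actions'' quantification. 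I would also record that the population counterfactual term of $\mathcal{L}_{\mathrm{CST}}$ with hard pseudolabels $\hat r_{i,a}=\bar f_0(x_i,a)$ is a cross-entropy surrogate whose $\argmax$-minimizer over $f$ agrees with matching $\bar f_0$, so its induced $0$--$1$ risk equals the pseudolabeling loss $L_{0\text{-}1}^{pl}(f)=\mathbb{P}_{\tilde{\mathcal{D}}}\big(f(\tilde x)\neq \bar f_0(\tilde x)\big)$ analyzed by \citet{wei2020theoretical}; likewise the counterfactual input-consistency term is their robustness regularizer $R_{\mathcal{B}}(f)=\mathbb{P}_{\tilde{\mathcal{D}}}\big(\exists\, \tilde x'\in\mathcal{N}(\tilde x):f(\tilde x')\neq f(\tilde x)\big)$.

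With this dictionary in place, the conclusion follows by invoking Theorem~4.3 of \citet{wei2020theoretical}: any minimizer $\hat f$ of the weighted objective $L_{0\text{-}1}^{pl}(f) + \lambda R_{\mathcal{B}}(f)$, for a suitable weight $\lambda$, satisfies $\mathrm{Err}_{\tilde{\mathcal{D}}}(\hat f) \le \frac{2}{c-1}\,\mathrm{Err}_{\tilde{\mathcal{D}}}(\bar f_0) + \frac{2c}{c-1}\,\mu$, which is strictly smaller than $\mathrm{Err}_{\tilde{\mathcal{D}}}(\bar f_0)=\beta$ once $c>3$ and $\mu$ is small enough, i.e.\ self-training with input consistency improves on the pseudolabeler. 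Translating $\hat f$ on $\tilde{\mathcal{X}}$ back to the family $f_\theta(\cdot\mid x,a)$ recovers exactly the CST update, giving Theorem~\ref{thm:main}. I expect the main obstacle to be not any single inequality --- the heavy lifting, the expansion-to-denoising argument, is inherited wholesale from \citet{wei2020theoretical} --- but rather making the augmented-input reduction airtight: one must verify that restricting perturbations to the $x$-coordinate gives a neighborhood structure for which the expansion and separation axioms are simultaneously meaningful per action and still compatible with their single-task population bound, and that the cross-entropy-to-$0$--$1$ surrogate step preserves the $\argmax$ alignment needed for the pseudolabeling-loss identity. A secondary subtlety is that Theorem~\ref{thm:main} is naturally a population statement (matching \citet{wei2020theoretical}); a finite-sample version would additionally require their generalization/all-layer-margin apparatus, which I would state separately rather than fold into the main claim.
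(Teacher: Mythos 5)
Your proposal is correct and follows essentially the same route as the paper: lift the problem to the augmented input space $\mathcal{X}\times\mathcal{A}$ with perturbations $\mathcal{T}'$ acting on $x$ only (so expansion, separation, and the per-class pseudolabel-error condition transfer action-by-action), then invoke Theorem~4.3 of \citet{wei2020theoretical} to obtain $\mathrm{Err}(f) \le \tfrac{2}{c-1}\mathrm{Err}(f^{\mathrm{pl}}) + \tfrac{2c}{c-1}\mu$. The only differences are cosmetic: the paper fixes the specific weighted loss and constants ($\alpha<0.2$, $c>5$) matching Wei et al.'s assumptions and additionally iterates the bound to get $\mathrm{Err}(f_i)\to \tfrac{2c}{c-3}\mu$, while you leave the weight $\lambda$ generic and state only the one-step improvement.
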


Motivated by the theoretical analysis that better input consistency can help further improve the pseudolabeling algorithm, we adapt Virtual Adversarial Training loss (VAT)~\citep{miyato2018virtual} in our CST algorithm, which we refer to as Counterfactual VAT (CVAT).
A key difference between CVAT and VAT is that CVAT considers the joint data-action space $(x, a)$ while VAT only regularizes on the input feature. 
Intuitively, VAT regularizes the output distribution to be isotropically smooth around each data point by selectively smoothing the model in its most anisotropic direction in an adversarial fashion. This direction is an adversarial perturbation $z^\text{adv}_i$ of input data $x_i$. Similar to VAT, CVAT minimizes the distance between $f_{\hat{\theta}}(x_i,a)$ and $f_\theta(x_i+z^\text{adv}_i,a)$ over all counterfactual actions for $x_i$,
\begin{align}
    &\mathcal{L}_\text{CVAT} = \sum_i \sum_{a \in \mathcal{A}\setminus a_i} {D\left[ f_{\hat{\theta}}(x_i,a), f_{\theta}(x_i + z^\text{adv}_i,a) \right]} \nonumber\\
    &z^\text{adv}_i = \argmax_{z; \Vert z \Vert_2 \le \epsilon}{ \sum_{a \in \mathcal{A}\setminus a_i} D\left[ f_{\hat{\theta}}(x_i,a), f_{\theta}(x_i + z, a) \right] }  \nonumber
\end{align}
\noindent where $D[\cdot,\cdot]$ is a divergence or distance measure between distributions and $\hat{\theta}$ stands for the current model parameters ($D(\cdot)$ denotes the actual function with substituted values). Like the original VAT, $D(z,x_i,a,\hat{\theta})$ takes the minimal value at $z=0$ for all $a$, the differentiability assumption dictates that the first derivative $\nabla_z \sum_{a \in \mathcal{A}\setminus a_i} D(z,x_i,a,\hat{\theta})|_{z=0}$ is zero. Therefore, for each $x_i$, the loss term can be approximated by its second-order Taylor approximation (around $z=0$) and $z^\text{adv}$ can still be effectively approximated by the first dominant eigenvector of the Hessian matrix via power iteration. 

The final CVAT regularized self-training objective can be written as
\begin{align}
\label{eqn:stvat}
    \mathcal{L}_\text{CST-CVAT} = \mathcal{L}_\text{CST} + \lambda \mathcal{L}_\text{CVAT}.
\end{align}
We iteratively train the model and impute pseudolabels with a fixed number of iterations.  The algorithm is stated in Algorithm~\ref{alg:cst}.
CST training with both loss $\mathcal{L}_\text{CST-CVAT}$ and $\mathcal{L}_\text{CST}$ is convergent, which we show in Proposition~\ref{prop:conv}. The proof is included in the Appendix. 

\begin{prop}
\label{prop:conv}
CST is convergent with a proper learning rate. 
\end{prop}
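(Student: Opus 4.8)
The plan is to read CST as a generalized EM / block coordinate descent scheme on the objective $\mathcal{L}_\text{CST}$ (and, with the regularizer, $\mathcal{L}_\text{CST-CVAT}$) over the two blocks $\theta$ and $\hat r$, and to prove convergence by the usual recipe: the objective is (i) bounded below, (ii) non-increasing along every update, hence (iii) its value converges by the monotone convergence theorem; under mild extra regularity the iterates themselves converge to a stationary point. Boundedness below is immediate --- each summand of $\mathcal{L}_\text{CST}$ is a cross-entropy $-\sum_k q_k\log f_\theta(k\mid x,a)$ with $q\in\Delta^{(m-1)}$ and $f_\theta(\cdot\mid x,a)\in\Delta^{(m-1)}$, hence nonnegative, and $\mathcal{L}_\text{CVAT}\ge 0$ since $D[\cdot,\cdot]$ is a divergence --- so it suffices to show each of the two alternating updates does not increase the objective.

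\emph{The $\hat r$-step.} For $\theta$ fixed (and note that $\mathcal{L}_\text{CVAT}$ does not involve $\hat r$), $\mathcal{L}_\text{CST}$ separates over the pairs $(i,a)$ with $a\neq a_i$, and in each block $\hat r_{i,a}$ it is the \emph{affine} map $\hat r_{i,a}\mapsto -\sum_k \hat r_{i,a,k}\log f_\theta(k\mid x_i,a)$ restricted to the simplex $\Delta^{(m-1)}$. Its minimum is attained at a vertex, namely the one-hot vector supported on $\argmax_k f_\theta(k\mid x_i,a)$, which is exactly the hard pseudolabel $\hat r_{i,a}=\bar f_\theta(x_i,a)$ used by the algorithm (ties broken by a fixed rule), cf.\ \citet{zou2019confidence}. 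So this step is exact block minimization and cannot increase the objective.

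\emph{The $\theta$-step.} Assume $f_\theta$ is continuously differentiable with Lipschitz gradient on the bounded region visited by the iterates --- true for standard networks with smooth activations and weight regularization --- so that $\mathcal{L}_{src}$ plus the counterfactual term, and (for a frozen target $\hat\theta$ and a frozen adversarial direction $z^\text{adv}$) also $\mathcal{L}_\text{CVAT}$, are $L$-smooth in $\theta$. Then a gradient step with learning rate $\eta\le 1/L$ obeys the descent inequality $\mathcal{L}(\theta^{+})\le \mathcal{L}(\theta)-\tfrac{\eta}{2}\|\nabla_\theta\mathcal{L}\|^2$, so the update is non-increasing; ``proper learning rate'' is precisely $\eta\le 1/L$ (or any step size accepted by backtracking line search), and iterating such steps within a training phase preserves monotonicity. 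Combining both steps, the objective values form a non-increasing sequence bounded below by $0$, hence convergent; and since the outer self-training loop runs a fixed number of rounds, the procedure terminates trivially.

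The main obstacle is the CVAT term, because the target $\hat\theta$ and the adversarial perturbation $z^\text{adv}$ are recomputed as training proceeds, so the objective actually being optimized is not literally one fixed function and the plain monotonicity argument does not directly apply. I would resolve this in one of two standard ways: (a) hold $\hat\theta$, $z^\text{adv}$ and the pseudolabels fixed throughout each inner optimization phase, so the inner loop genuinely descends one fixed $L$-smooth surrogate while the (finite) outer loop only refreshes them; or (b) take a majorization--minimization view, using that $D[f_{\hat\theta}(x,a),f_\theta(x+z,a)]$ is minimized in $z$ at the $\hat\theta$-consistent point, so refreshing $\hat\theta\leftarrow\theta$ after a descent step cannot increase the underlying CST objective. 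Finally, for the iterate-level statement one adds that there are only finitely many pseudolabel configurations, so if the $\hat r$-step strictly decreases the objective whenever it changes the configuration (a generic-position / fixed-tie-break condition) the pseudolabels are eventually constant; from then on CST is plain gradient descent on a fixed $L$-smooth function, and classical block-coordinate / Zangwill-type results --- via continuity of the gradient-step map --- give that every limit point of $\{\theta_t\}$ is stationary for $\mathcal{L}_\text{CST}$ (resp.\ the surrogate in case (a)).
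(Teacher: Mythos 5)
Your proposal is correct and follows essentially the same route as the paper's proof: alternating minimization where the pseudolabel step is exact minimization of an affine function over the simplex (attained at the argmax vertex) and the gradient step on $\theta$ is non-increasing with a proper learning rate, so the lower-bounded objective value converges monotonically. Your additional care with the $\mathcal{L}_\text{CVAT}$ moving target $\hat\theta$ and $z^\text{adv}$, and the iterate-level stationarity argument, go beyond the paper, which simply folds the CVAT term into the retraining loss and stops at convergence of the objective values.
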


\begin{algorithm}[!htbp]
\caption{Counterfactual Self-Training}
\label{alg:cst}
\begin{algorithmic}[1]
    \State Select loss function  $\mathcal{L}_\text{ST} \leftarrow \mathcal{L}_\text{CST}$ in Equation~\ref{eqn:obj} or $\mathcal{L}_\text{ST} \leftarrow \mathcal{L}_\text{CST-CVAT}$ in Equation~\ref{eqn:stvat}.
    \State Train a base learner $f_0$ on observational data. 
    \For{each iteration}
    \For{each $a \in \mathcal{A} \setminus a_i$}
        \State Impute pseudolabel $\hat{r}_{i,a} = \Bar{f}_0(x_i,a)$. 
    \EndFor
    \While{not converged}
        \State $\theta \leftarrow \text{Optim}(\nabla_{\theta}\mathcal{L}_\text{ST}, \theta)$ \algorithmiccomment{Self-training}
        \For{each $i \in \{ 1 \ldots N \}$ and $a \in \mathcal{A} \setminus a_i$}
                \State $\hat{r}_{i,a} = \Bar{f}_0(x_i,a)$. \algorithmiccomment{Imputation}
        \EndFor
    \EndWhile
    \State $f_0 = f_\theta$
    \EndFor
\end{algorithmic}
\end{algorithm}

\section{Experiments}\label{sec:exp}
We begin with a toy example of binary actions  to visualize and illustrate the benefit brought by CST. Next we construct synthetic datasets for a pricing example and utilize
two real datasets to demonstrate the efficacy of our proposed algorithm. For the implementation, we use a three layer neural network with 128 nodes as our model with dropout of $p=0.2$ and Leaky ReLU activation unless specified otherwise, and cross entropy loss as the loss function. We adapt the following backbone models which are originally designed for counterfactual regression tasks to counterfactual classification in our experiments to warm start CST, the implementation details are included in the Appendix:
\begin{itemize}
    \item Direct Method (DM): This baseline directly trains a model on observational data.  
    \item HSIC~\citep{lopez2020cost}: We use the last layer as an embedding and calculate HSIC between the embedding and the actions, HSIC is then used as regularization in addition to cross entropy loss. 
    \item Uniform DM (UDM) ~\citep{Wang2019BatchLF}: This method estimates the logging policy using historical data and uses importance sampling to simulate a randomize trial. 
\end{itemize}

Once we finish training the backbone model, we run CST using pseudolabels generated by the backbone model according to Equation~\ref{eqn:obj} (PL) or 
with additional CVAT loss defined in Equation~\ref{eqn:stvat} (PL + CVAT). 
The distance measure used in CVAT loss is chosen to be KL divergence. The stepsize in finite-difference approximation, number of power iteration steps and perturbation size are set to 10, 3 and 1 respectively. The number of iteration of CST is set to 2. The hyperparameter of CVAT loss is chosen using a grid search of  $[0.01,0.1,1,10]$ on a validation set. 
All experiments are conducted using one NVidia GTX 1080-Ti GPU with five repetitions and Adam optimizer~\citep{kingma2014adam}. Mean and standard error are reported for each metric. For synthetic and real data experiments, we choose one downstream task to examine how each method performs by predicting which action is most likely to cause a positive outcome ($r=1$). 

\paragraph{Toy Data}

We present a toy example to illustrate CST procedure with PL + CVAT and offer some intuition on how this method works\footnote{Code is available at \url{https://github.com/ruijiang81/CST}}. In our examples, we assume there are two available offers/actions A0 and A1, and two types of customers P0 and P1. P0 reacts favorably  to offer A0 (i.e., resulted in purchases) and has no reaction to A1. P1, on the other hand, has exactly the opposite reactions towards the offers.  The underlying data distribution for P0 and P1 is a simple two moon distribution as shown in Figure~\ref{fig:toy} with shaded points. 
The training (factual) data points are generated through a biased logging policy which has a higher propensity $\exp(-(x_0-min(x_0))$ to assign action A1 for features with small $x_0$, which is the first dimension of feature $\mathcal{X}$. The direct method is chosen to be a three-layer MLP with 16 hidden nodes and dropout layer with $p = 0.5$.  

Figure~\ref{fig:dma0_ini},~\ref{fig:dma1_ini} show the data samples of 50 observed data points. The curve represents the decision boundary of the current model, initially trained with the direct method in Figure~\ref{fig:dma0_ini}-\ref{fig:impa1_ini}. It fails to converge to the optimal solution for action A1, despite  achieving a good performance for A0 on the training data. 
The CST algorithm imputes the pseudolabels and then repeats its training in an iterative fashion. Here we use $\lambda = 1$ for CVAT loss to enforce input consistency. Figure~\ref{fig:dma0_it0} and~\ref{fig:dma1_it0} show the output after the first imputation, while Figure~\ref{fig:dma0_it10} and~\ref{fig:dma1_it10} depict the result after 10 iterations. We observe that CST with PL + CVAT converges to the correct solution for both actions and the margin between the two classes becomes much larger with training. Compared to the direct method, our counterfactual self-training algorithm better utilizes the underlying data structure and refines the decision boundary through extrapolation across actions. 

\begin{figure*}[!htbp]%
    \centering
    \subfloat[\centering Toy Data Samples - A0\label{fig:dma0_ini}]{{\includegraphics[width=0.2\linewidth]{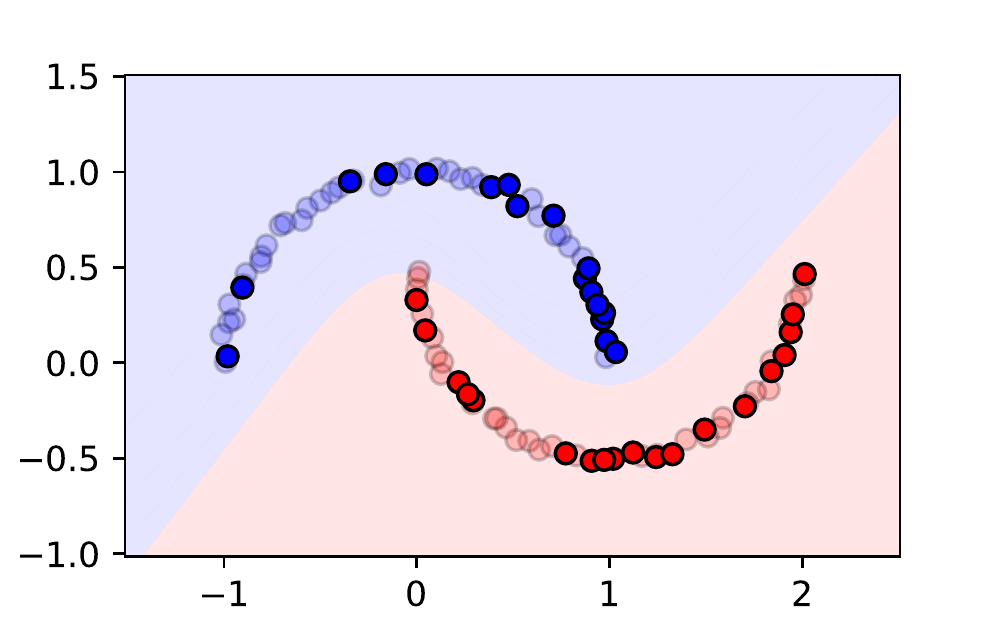} }}%
    \subfloat[\centering Toy Data Samples - A1\label{fig:dma1_ini}]{{\includegraphics[width=0.2\linewidth]{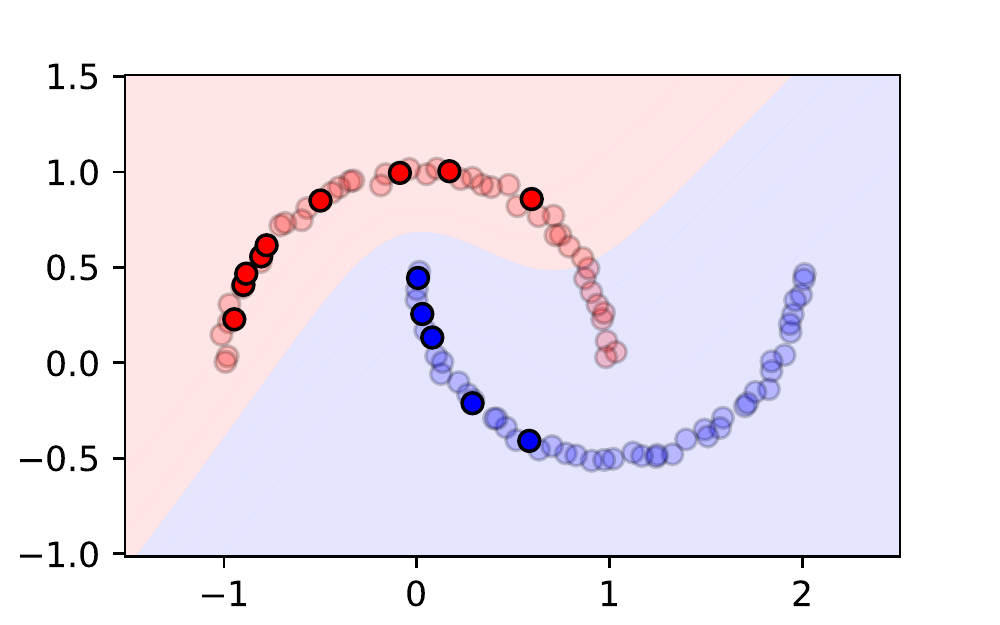} }}%
    \subfloat[\centering Imputed - A0\label{fig:impa1_ini}]{{\includegraphics[width=0.2\linewidth]{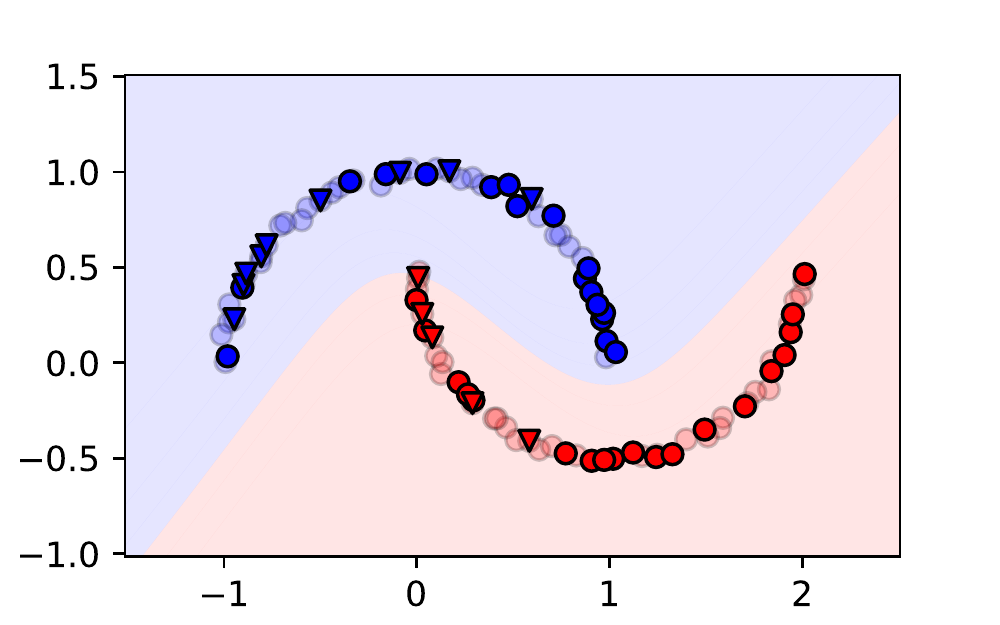} }}%
    \subfloat[\centering Imputed - A1\label{fig:impa1_ini}]{{\includegraphics[width=0.2\linewidth]{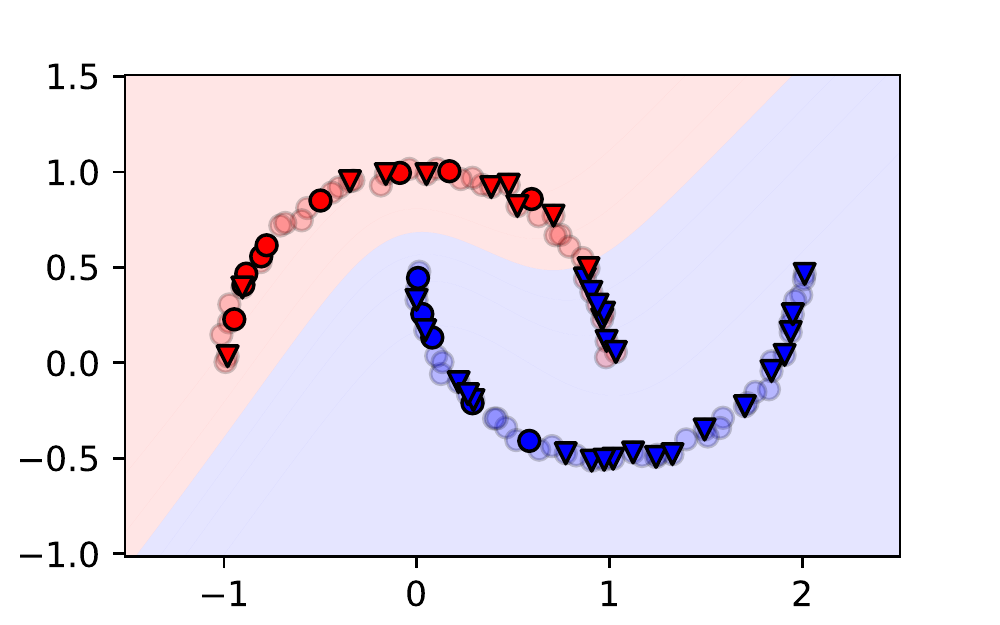} }} \\ %
    \subfloat[\centering PL Iter 1 - A0\label{fig:dma0_it0}]{{\includegraphics[width=0.2\linewidth]{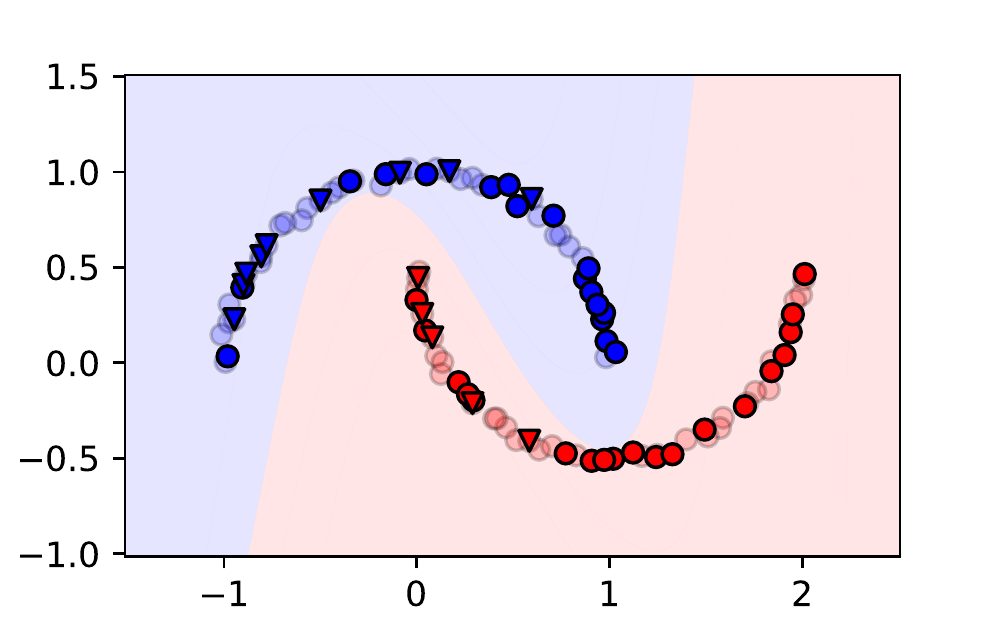} }}%
    \subfloat[\centering PL Iter 1 - A1\label{fig:dma1_it0}]{{\includegraphics[width=0.2\linewidth]{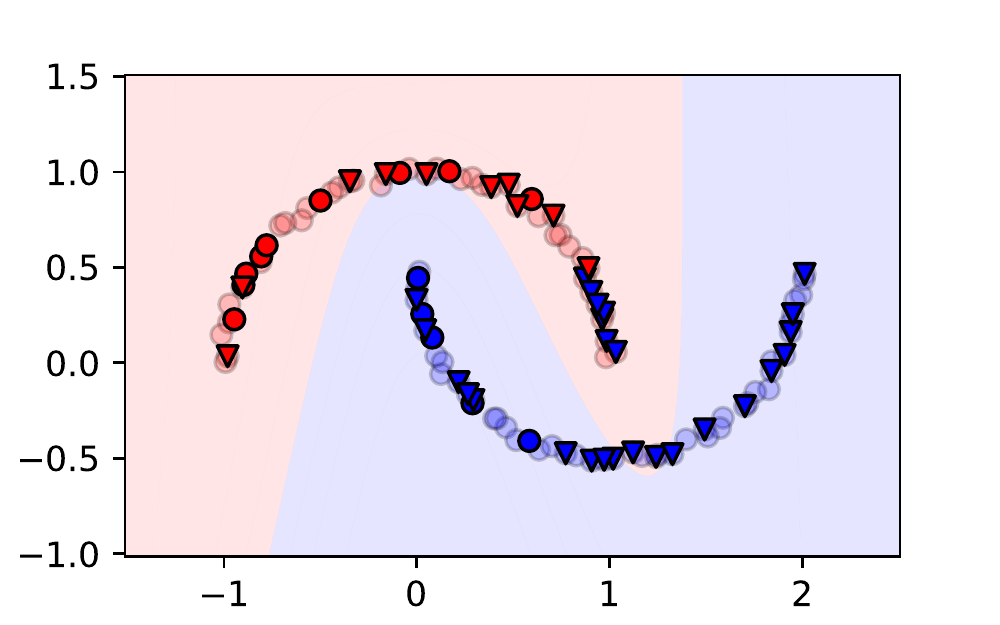} }}%
    \subfloat[\centering PL Iter 10 - A0\label{fig:dma0_it10}]{{\includegraphics[width=0.2\linewidth]{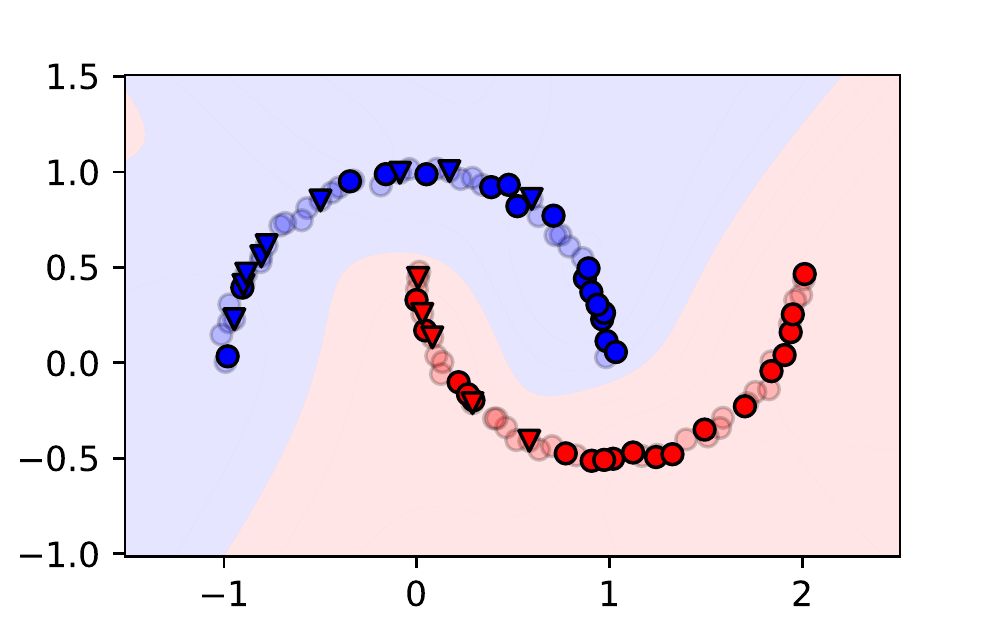} }}%
    \subfloat[\centering PL Iter 10 - A1\label{fig:dma1_it10}]{{\includegraphics[width=0.2\linewidth]{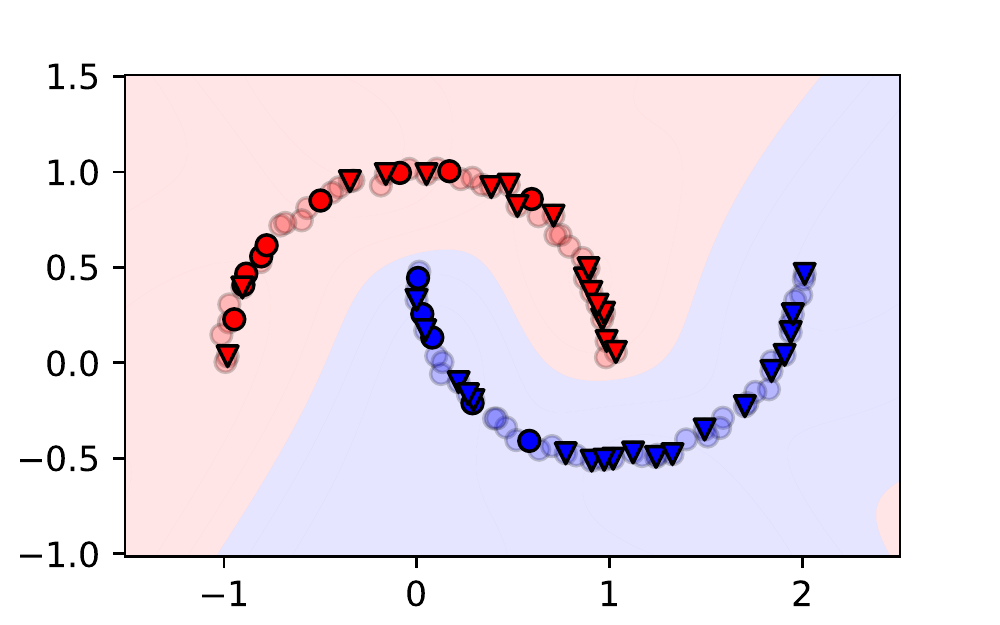} }} \\ %
    \caption{CST with PL + CVAT on Toy Data. Points and Triangles represent factual data and imputed pseudolabels respectively. The underlying data distribution is shown using shaded points. The curve represents the decision boundary of the current model. (c), (d): Direct Method Solution; (g), (h): CST with PL + CVAT Solution.}%
    \label{fig:toy}
\end{figure*}

\paragraph{Synthetic Datasets}

\begin{table*}[h]
    \centering
    \resizebox{0.85\textwidth}{!}{
    \begin{tabular}{lcccccc}\toprule 
      Model & Method & D1 & D2 & D3 & D4 & D5\\\hline
     \multirow{3}{*}{DM}&Backbone
     &2.3738$\pm$0.0747
     &2.9439$\pm$0.2395
     &3.2444$\pm$0.0856
     &3.3443$\pm$0.1210
     &2.4220$\pm$0.1275
     \\ 
     & PL 
     &0.5744$\pm$0.0165
     &0.6997$\pm$0.0317
     &0.7263$\pm$0.0205
     &0.8043$\pm$0.0348
     &0.6425$\pm$0.0253
     \\ 
     & PL + CVAT 
     &\textbf{0.5422}$\pm$0.0080
     &\textbf{0.5863}$\pm$0.0145
     &\textbf{0.6825}$\pm$0.0144
     &\textbf{0.6807}$\pm$0.0132
     &\textbf{0.5754}$\pm$0.0211
     \\ \hline 
    \multirow{3}{*}{HSIC}&Backbone
     &1.4748$\pm$0.0183
     &1.6062$\pm$0.0645
     &1.8629$\pm$0.0190
     &1.9334$\pm$0.0564
     &1.5240$\pm$0.0601
     \\ 
     & PL 
     &0.6028$\pm$0.0199
     &0.6840$\pm$0.0361
     &0.7177$\pm$0.0199
     &0.7423$\pm$0.0262
     &0.6220$\pm$0.0197
     \\ 
     & PL + CVAT 
     &\textbf{0.5418}$\pm$0.0147
     &\textbf{0.5759}$\pm$0.0074
     &\textbf{0.6553}$\pm$0.0086
     &\textbf{0.6593}$\pm$0.0108
     &\textbf{0.5591}$\pm$0.0135
     \\ \hline 
    \multirow{3}{*}{UDM}&Backbone
     &2.5589$\pm$0.2602
     &2.7615$\pm$0.6944
     &1.6277$\pm$0.4775
     &3.0926$\pm$0.5732
     &2.5924$\pm$0.6082
     \\ 
     & PL 
     &0.5841$\pm$0.0161
     &0.6315$\pm$0.0229
     &0.7090$\pm$0.0195
     &0.7258$\pm$0.0118
     &0.5936$\pm$0.0137
     \\ 
     & PL + CVAT 
     &\textbf{0.5553}$\pm$0.0238
     &\textbf{0.5891}$\pm$0.0221
     &\textbf{0.6502}$\pm$0.0087
     &\textbf{0.6446}$\pm$0.0039
     &\textbf{0.5616}$\pm$0.0145
     \\ \bottomrule
    \end{tabular}
    }
    \caption{Log Loss on Synthetic Dataset.}
    \label{tab:syn_soft}
\end{table*}

\begin{table*}[!htbp]
    \centering
    \resizebox{0.85\textwidth}{!}{
    \begin{tabular}{lcccccc}\toprule
      Model  & Method & D1 & D2 & D3 & D4 & D5\\\hline
     \multirow{3}{*}{DM}&Backbone
     &0.2922$\pm$0.0100
     &0.2775$\pm$0.0030
     &0.3588$\pm$0.0046
     &0.3560$\pm$0.0056
     &0.2816$\pm$0.0090
     \\ 
     & PL 
     &\textbf{0.2561}$\pm$0.0057
     &0.2564$\pm$0.0046
     &0.3363$\pm$0.0066
     &0.3244$\pm$0.0039
     &0.2545$\pm$0.0060
     \\ 
     & PL + CVAT 
     &0.2566$\pm$0.0039
     &\textbf{0.2544}$\pm$0.0029
     &\textbf{0.3299}$\pm$0.0050
     &\textbf{0.3219}$\pm$0.0037
     &\textbf{0.2494}$\pm$0.0047
     \\ \hline 
    \multirow{3}{*}{HSIC}&Backbone
     &0.2804$\pm$0.0093
     &0.2767$\pm$0.0040
     &0.3573$\pm$0.0054
     &0.3551$\pm$0.0083
     &0.2785$\pm$0.0076
     \\ 
     & PL 
     &\textbf{0.2623}$\pm$0.0097
     &0.2554$\pm$0.0030
     &0.3328$\pm$0.0059
     &0.3308$\pm$0.0097
     &0.2554$\pm$0.0053
     \\ 
     & PL + CVAT 
     &0.2673$\pm$0.0082
     &\textbf{0.2553}$\pm$0.0030
     &\textbf{0.3285}$\pm$0.0045
     &\textbf{0.3214}$\pm$0.0040
     &\textbf{0.2544}$\pm$0.0062
     \\ \hline 
    \multirow{3}{*}{UDM}&Backbone
     &0.2718$\pm$0.0077
     &0.2710$\pm$0.0023
     &0.3645$\pm$0.0043
     &0.3639$\pm$0.0085
     &0.2799$\pm$0.0055
     \\ 
     & PL 
     &\textbf{0.2602}$\pm$0.0086
     &0.2603$\pm$0.0029
     &\textbf{0.3390}$\pm$0.0075
     &0.3380$\pm$0.0090
     &0.2620$\pm$0.0044
     \\ 
     & PL + CVAT 
     &0.2608$\pm$0.0082
     &\textbf{0.2580}$\pm$0.0027
     &0.3496$\pm$0.0144
     &\textbf{0.3274}$\pm$0.0019
     &\textbf{0.2564}$\pm$0.0029
     \\ \bottomrule 
    \end{tabular}
    }
    \caption{Hamming Loss on Synthetic Dataset.}
    \label{tab:syn_hamm}
\end{table*}

\begin{table*}[h]
    \centering
    \begin{tabular}{lcccc}\toprule 
       Model + Metric & Method & $o=1$&$o=2$&$o=3$  \\\hline
     \multirow{3}{*}{DM + Hamming Loss}& Backbone
     &0.2886$\pm$0.0065
     &0.2850$\pm$0.0085
     &0.2901$\pm$0.0093
     \\ 
     & PL 
     &0.2611$\pm$0.0045
     &\textbf{0.2645}$\pm$0.0097
     &\textbf{0.2606}$\pm$0.0048
     \\ 
     & PL + CVAT 
     &\textbf{0.2605}$\pm$0.0059
     &0.2677$\pm$0.0145
     &0.2696$\pm$0.0092
     \\ \hline 
     \multirow{3}{*}{DM + Log Loss}& Backbone
     &2.4871$\pm$0.1057
     &2.4993$\pm$0.1100
     &2.3416$\pm$0.0093
     \\ 
     & PL 
     &0.6200$\pm$0.0316
     &0.6053$\pm$0.0237
     &0.6132$\pm$0.0213
     \\ 
     & PL + CVAT 
     &\textbf{0.5643}$\pm$0.0118
     &\textbf{0.5672}$\pm$0.0170
     &\textbf{0.5664}$\pm$0.0126
     \\ \bottomrule 
    \end{tabular}
    \caption{Experiments with Varying Overlap}
    \label{tab:varyovp}
\end{table*}

In synthetic experiments, we use a pricing example similar to the experiment in~\citet{lopez2020cost}. Let $U(\cdot,\cdot)$ be a uniform distribution. Assume customer features are a 50-dimensional vector $X$ drawn from $U(0,1)^{50}$ and there are 5 price options from \$1 to \$5. The logging policy is set as $\pi(p=i|x) = \frac{x_i}{\sum_{i=1}^{10}x_i}$. Five types of demand functions are simulated, and the complete data generation process is detailed in the Appendix. For each action, there is a binary outcome: purchase (1) or not purchase (0).

We generate 1000 samples for each demand function and report the true negative log likelihood (NLL) to examine probability estimation of each method in Table \ref{tab:syn_soft}. Hamming loss which relies on the hard labels generated by the algorithm is reported in Table \ref{tab:syn_hamm}.
Since we want to fit a classifier that may work for many possible downstream tasks, the main metric we focus is NLL given current classifiers. For example, in revenue management, estimating demand accurately is critical for price optimization, inventory control and promotion planning. More specifically, NLL is defined as 
\begin{align}
    \sum_{i=1}^N \Big(
    &\sum_{k=1}^m - r_{i,a_i,k} \log f_{\theta}(r_{i,a_i,k}|x_i,a_i) + \\ \nonumber 
    &\sum_{a\in \mathcal{A}\setminus a_i}\sum_{k=1}^m- {r}_{i,a,k} \log f_{\theta}({r}_{i,a,k}|x_i,a_i) \Big)/N|\mathcal{A}|
\end{align}

and hamming loss is defined as $\sum_{i=1}^N \sum_{a\in\mathcal{A}}$
$\mathbb{I}(y_{i,a}=\hat{y}_{i,a})/N|\mathcal{A}|$. 

With respect to different backbone models, we find UDM and HSIC in general improve over Direct Method with a better NLL, with HSIC being the best model. In all experiments, we find PL improves NLL from backbone model and incorporating CVAT loss further improves the performance, highlighting the value of our proposed CST algorithm. 
Similarly, pseudolabeling and CVAT loss can provide improvement for prediction accuracy measured by hamming loss. In some experiments, PL achieves the best hamming loss, this discrepancy between NLL and hamming loss may be caused by over-confident wrong predictions made by pseudolabels~\citep{zou2019confidence} and including CVAT loss provides a viable way to smooth output predictions to avoid over-confident predictions.


\paragraph{Multi-Label Datasets}

\begin{table}[!htbp]
    \centering
    \resizebox{\linewidth}{!}{
    \begin{tabular}{lcccc}\toprule 
       Model & Method & Scene & Yeast & \\\hline
     \multirow{3}{*}{DM}&Backbone
     &3.2289$\pm$0.1678
     &5.5001$\pm$0.1218
     &
     \\ 
     & PL 
     &0.3357$\pm$0.0132
     &0.6956$\pm$0.0318
     &
     \\ 
     & PL + CVAT 
     &\textbf{0.3272}$\pm$0.0124
     &\textbf{0.5234}$\pm$0.0053
     &
     \\ \hline 
    \multirow{3}{*}{HSIC}&Backbone
     &1.0385$\pm$0.0320
     &1.9315$\pm$0.0681
     &
     \\ 
     & PL 
     &0.3667$\pm$0.0061
     &0.7372$\pm$0.0980
     &
     \\ 
     & PL + CVAT 
     &\textbf{0.3503}$\pm$0.0056
     &\textbf{0.5199}$\pm$0.0048
     &
     \\ \hline 
    \multirow{3}{*}{UDM}&Backbone
     &6.2181$\pm$0.4192
     &7.5073$\pm$0.9486
     &
     \\ 
     & PL 
     &0.5526$\pm$0.0160
     &0.7785$\pm$0.0924
     &
     \\ 
     & PL + CVAT 
     &\textbf{0.4373}$\pm$0.0152
     &\textbf{0.5441}$\pm$0.0119
     &
     \\ \bottomrule 
    \end{tabular}
    }
    \caption{Log Loss on Real Dataset.}
    \label{tab:real_soft}
\end{table}

\begin{table}[!htbp]
    \centering
    \resizebox{\linewidth}{!}{
    \begin{tabular}{lcccc}\toprule 
       Model & Method & Scene & Yeast & \\\hline
     \multirow{3}{*}{DM}&Backbone
     &0.1301$\pm$0.0036
     &0.2853$\pm$0.0080
     &
     \\ 
     & PL 
     &0.1304$\pm$0.0034
     &0.2539$\pm$0.0066
     &
     \\ 
     & PL + CVAT 
     &\textbf{0.1258}$\pm$0.0024
     &\textbf{0.2434}$\pm$0.0054
     &
     \\ \hline 
    \multirow{3}{*}{HSIC}&Backbone
     &0.1821$\pm$0.0031
     &0.2821$\pm$0.0073
     &
     \\ 
     & PL 
     &0.1546$\pm$0.0041
     &0.2471$\pm$0.0052
     &
     \\ 
     & PL + CVAT 
     &\textbf{0.1473}$\pm$0.0033
     &\textbf{0.2366}$\pm$0.0043
     &
     \\ \hline 
    \multirow{3}{*}{UDM}&Backbone
     &0.2542$\pm$0.0079
     &0.3252$\pm$0.0040
     &
     \\ 
     & PL 
     &0.2274$\pm$0.0129
     &0.2910$\pm$0.0100
     &
     \\ 
     & PL + CVAT 
     &\textbf{0.1852}$\pm$0.0081
     &\textbf{0.2868}$\pm$0.0119
     &
     \\ \bottomrule 
    \end{tabular}
    }
    \caption{Hamming Loss on Real Dataset.}
    \label{tab:real_hamm}
\end{table}

We use two multi-label datasets from  \href{https://www.csie.ntu.edu.tw/~cjlin/libsvmtools/datasets/multilabel.html}{LIBSVM repository}~\citep{elisseeff2002kernel,boutell2004learning,chang2011libsvm}, which are used for semantic scene and text classification. We convert the supervised learning datasets to bandit feedback by creating a logging policy using 5\% of the data following ~\citet{swaminathan2015counterfactual,lopez2020cost}. More specifically, each feature $x$ has a label $y\in \{0,1\}^p$ where $p$ is the number of labels. After the logging policy selects a label (action) $i$, a reward $y_i$ is revealed as bandit feedback $(x,i,y_i)$, \ie  for each data point, if the policy selects one of the correct labels of that data point, it gets a reward of 1 and  0 otherwise. By doing so, we have the full knowledge of counterfactual outcomes for evaluation.  Data statistics are summarized in the Appendix. 
True negative log loss and hamming loss are reported in Table \ref{tab:real_soft} and \ref{tab:real_hamm} respectively.

On both datasets, we observe that pseudolabeling and CVAT loss improve backbone model performance in terms of hamming loss and negative log loss and CVAT loss brings additional improvement. The improvement of pseudolabeling and CVAT loss is also dependent on backbone model performance.  In particular, we observe a better base model leads to better CST  performance.  


\paragraph{Varying Overlap}
In this section, we vary the level of overlap to control the randomness of observational data. We use the synthetic data setup using D1 and set logging policy as $\pi(p=i|x) = \frac{e^{o*x_i}}{\sum_{i=1}^{10}e^{o*x_i}}$. The results are summarized in Table~\ref{tab:varyovp}. We find both DM and our proposed method is robust to different level of randomness in observational data. Similar conclusion is also observed in~\citet{sachdeva2020off}.

\section{Conclusion and Future Work}
In this paper, we propose a novel counterfactual self-training algorithm for counterfactual classification problem. CST relies on pseudolabeling to estimate the true log likelihood on observational data and works in an iterative fashion. With expansion and separation assumptions, CST experiences the same theoretical guarantees for performance improvement as shown in existing pseudolabeling literature. Our results hold promises for many counterfactual classification models that may work beyond neural network models due to the model agnostic nature of pseudolabeling. However, our CST framework has several limitations. First, CST requires finite \textit{categorical} action set, which is also a crucial requirement for CRM methods. In order to augment observation data, CST will augment every action not observed. For continuous action, discretization or joint kernel embedding proposed in~\cite{zenati2020counterfactual} might be used as an extension to CST, which we leave for future work. Second, while our experimental results show encouraging results that CST can improve even state-of-the-art models, the expansion and base-model performance assumption required in current theoretical understanding generally cannot be verified, which means CST may deteriorate model performance when the underlying data is noisy or base model has a high error rate. In practice, practitioners can use model selection methods such as counterfactual cross validation~\citep{saito2020counterfactual} to avoid this, and we argue CST still provides a promising means for enhancing counterfactual classification performance. 


\newpage
\section{Acknowledgement}
We thank the useful suggestions from anonymous reviewers.  
\bibliography{arxiv}

\begin{thebibliography}{54}
\providecommand{\natexlab}[1]{#1}

\bibitem[{Alaa and van~der Schaar(2017)}]{alaa2017bayesian}
Alaa, A.~M.; and van~der Schaar, M. 2017.
\newblock Bayesian inference of individualized treatment effects using
  multi-task gaussian processes.
\newblock \emph{arXiv preprint arXiv:1704.02801}.

\bibitem[{Amini and Gallinari(2002)}]{amini2002semi}
Amini, M.-R.; and Gallinari, P. 2002.
\newblock Semi-supervised logistic regression.

\bibitem[{Bertsimas and Kallus(2016)}]{bertsimas2016power}
Bertsimas, D.; and Kallus, N. 2016.
\newblock The power and limits of predictive approaches to
  observational-data-driven optimization.
\newblock \emph{arXiv preprint arXiv:1605.02347}.

\bibitem[{Biggs, Gao, and Sun(2021)}]{biggs2021loss}
Biggs, M.; Gao, R.; and Sun, W. 2021.
\newblock Loss Functions for Discrete Contextual Pricing with Observational
  Data.
\newblock \emph{arXiv preprint arXiv:2111.09933}.

\bibitem[{Boutell et~al.(2004)Boutell, Luo, Shen, and
  Brown}]{boutell2004learning}
Boutell, M.~R.; Luo, J.; Shen, X.; and Brown, C.~M. 2004.
\newblock Learning multi-label scene classification.
\newblock \emph{Pattern recognition}, 37(9): 1757--1771.

\bibitem[{Chang and Lin(2011)}]{chang2011libsvm}
Chang, C.-C.; and Lin, C.-J. 2011.
\newblock LIBSVM: A library for support vector machines.
\newblock \emph{ACM transactions on intelligent systems and technology (TIST)},
  2(3): 1--27.

\bibitem[{Dud{\'\i}k et~al.(2014)Dud{\'\i}k, Erhan, Langford, Li
  et~al.}]{dudik2014doubly}
Dud{\'\i}k, M.; Erhan, D.; Langford, J.; Li, L.; et~al. 2014.
\newblock Doubly robust policy evaluation and optimization.
\newblock \emph{Statistical Science}, 29(4): 485--511.

\bibitem[{Elisseeff and Weston(2002)}]{elisseeff2002kernel}
Elisseeff, A.; and Weston, J. 2002.
\newblock A kernel method for multi-labelled classification.
\newblock In \emph{Advances in neural information processing systems},
  681--687.

\bibitem[{Forney and Bareinboim(2019)}]{forney2019counterfactual}
Forney, A.; and Bareinboim, E. 2019.
\newblock Counterfactual Randomization: Rescuing Experimental Studies from
  Obscured Confounding.
\newblock In \emph{Proceedings of the AAAI Conference on Artificial
  Intelligence}, volume~33, 2454--2461.

\bibitem[{Gao et~al.(2021)Gao, Saar-Tsechansky, De-Arteaga, Han, Lee, and
  Lease}]{gao2021human}
Gao, R.; Saar-Tsechansky, M.; De-Arteaga, M.; Han, L.; Lee, M.~K.; and Lease,
  M. 2021.
\newblock Human-AI Collaboration with Bandit Feedback.
\newblock \emph{arXiv preprint arXiv:2105.10614}.

\bibitem[{Grandvalet and Bengio(2005)}]{grandvalet2005semi}
Grandvalet, Y.; and Bengio, Y. 2005.
\newblock Semi-supervised learning by entropy minimization.
\newblock In \emph{Advances in neural information processing systems},
  529--536.

\bibitem[{Greensmith, Bartlett, and Baxter(2004)}]{greensmith2004variance}
Greensmith, E.; Bartlett, P.~L.; and Baxter, J. 2004.
\newblock Variance reduction techniques for gradient estimates in reinforcement
  learning.
\newblock \emph{Journal of Machine Learning Research}, 5(Nov): 1471--1530.

\bibitem[{Gretton et~al.(2008)Gretton, Fukumizu, Teo, Song, Sch{\"o}lkopf, and
  Smola}]{gretton2008kernel}
Gretton, A.; Fukumizu, K.; Teo, C.~H.; Song, L.; Sch{\"o}lkopf, B.; and Smola,
  A.~J. 2008.
\newblock A kernel statistical test of independence.
\newblock In \emph{Advances in neural information processing systems},
  585--592.

\bibitem[{Han et~al.(2019)Han, Zou, Gao, Wang, and
  Metaxas}]{han2019unsupervised}
Han, L.; Zou, Y.; Gao, R.; Wang, L.; and Metaxas, D. 2019.
\newblock Unsupervised domain adaptation via calibrating uncertainties.
\newblock In \emph{CVPR Workshops}, volume~9.

\bibitem[{Hill(2011)}]{hill2011bayesian}
Hill, J.~L. 2011.
\newblock Bayesian nonparametric modeling for causal inference.
\newblock \emph{Journal of Computational and Graphical Statistics}, 20(1):
  217--240.

\bibitem[{Imbens and Wooldridge(2009)}]{imbens2009recent}
Imbens, G.~W.; and Wooldridge, J.~M. 2009.
\newblock Recent developments in the econometrics of program evaluation.
\newblock \emph{Journal of economic literature}, 47(1): 5--86.

\bibitem[{Joachims, Swaminathan, and de~Rijke(2018)}]{joachims2018deep}
Joachims, T.; Swaminathan, A.; and de~Rijke, M. 2018.
\newblock Deep learning with logged bandit feedback.
\newblock In \emph{International Conference on Learning Representations}.

\bibitem[{Johansson, Shalit, and Sontag(2016)}]{johansson2016learning}
Johansson, F.; Shalit, U.; and Sontag, D. 2016.
\newblock Learning representations for counterfactual inference.
\newblock In \emph{International conference on machine learning}, 3020--3029.

\bibitem[{Kallus(2019)}]{kallus2019classifying}
Kallus, N. 2019.
\newblock Classifying treatment responders under causal effect monotonicity.
\newblock \emph{arXiv preprint arXiv:1902.05482}.

\bibitem[{Kallus and Zhou(2018)}]{kallus2018confounding}
Kallus, N.; and Zhou, A. 2018.
\newblock Confounding-robust policy improvement.
\newblock In \emph{Advances in neural information processing systems},
  9269--9279.

\bibitem[{Kang, Schafer et~al.(2007)}]{kang2007demystifying}
Kang, J.~D.; Schafer, J.~L.; et~al. 2007.
\newblock Demystifying double robustness: A comparison of alternative
  strategies for estimating a population mean from incomplete data.
\newblock \emph{Statistical science}, 22(4): 523--539.

\bibitem[{Kingma and Ba(2014)}]{kingma2014adam}
Kingma, D.~P.; and Ba, J. 2014.
\newblock Adam: A method for stochastic optimization.
\newblock \emph{arXiv preprint arXiv:1412.6980}.

\bibitem[{K{\"o}k and Fisher(2007)}]{kok2007demand}
K{\"o}k, A.~G.; and Fisher, M.~L. 2007.
\newblock Demand estimation and assortment optimization under substitution:
  Methodology and application.
\newblock \emph{Operations Research}, 55(6): 1001--1021.

\bibitem[{K{\"u}nzel et~al.(2019)K{\"u}nzel, Sekhon, Bickel, and
  Yu}]{kunzel2019metalearners}
K{\"u}nzel, S.~R.; Sekhon, J.~S.; Bickel, P.~J.; and Yu, B. 2019.
\newblock Metalearners for estimating heterogeneous treatment effects using
  machine learning.
\newblock \emph{Proceedings of the national academy of sciences}, 116(10):
  4156--4165.

\bibitem[{Lefortier et~al.(2016)Lefortier, Swaminathan, Gu, Joachims, and
  de~Rijke}]{lefortier2016large}
Lefortier, D.; Swaminathan, A.; Gu, X.; Joachims, T.; and de~Rijke, M. 2016.
\newblock Large-scale validation of counterfactual learning methods: A
  test-bed.
\newblock \emph{arXiv preprint arXiv:1612.00367}.

\bibitem[{Li et~al.(2010)Li, Chu, Langford, and Schapire}]{li2010contextual}
Li, L.; Chu, W.; Langford, J.; and Schapire, R.~E. 2010.
\newblock A contextual-bandit approach to personalized news article
  recommendation.
\newblock In \emph{Proceedings of the 19th international conference on World
  wide web}, 661--670.

\bibitem[{Li et~al.(2008)Li, Guan, Li, and Chin}]{li2008self}
Li, Y.; Guan, C.; Li, H.; and Chin, Z. 2008.
\newblock A self-training semi-supervised SVM algorithm and its application in
  an EEG-based brain computer interface speller system.
\newblock \emph{Pattern Recognition Letters}, 29(9): 1285--1294.

\bibitem[{Lopez et~al.(2020)Lopez, Li, Yan, and Xiong}]{lopez2020cost}
Lopez, R.; Li, C.; Yan, X.; and Xiong, J. 2020.
\newblock Cost-Effective Incentive Allocation via Structured Counterfactual
  Inference.

\bibitem[{Maurer and Pontil(2009)}]{maurer2009empirical}
Maurer, A.; and Pontil, M. 2009.
\newblock Empirical Bernstein bounds and sample variance penalization.
\newblock \emph{arXiv preprint arXiv:0907.3740}.

\bibitem[{McGill and Van~Ryzin(1999)}]{mcgill1999revenue}
McGill, J.~I.; and Van~Ryzin, G.~J. 1999.
\newblock Revenue management: Research overview and prospects.
\newblock \emph{Transportation science}, 33(2): 233--256.

\bibitem[{Miyato et~al.(2018)Miyato, Maeda, Koyama, and
  Ishii}]{miyato2018virtual}
Miyato, T.; Maeda, S.-i.; Koyama, M.; and Ishii, S. 2018.
\newblock Virtual adversarial training: a regularization method for supervised
  and semi-supervised learning.
\newblock \emph{IEEE transactions on pattern analysis and machine
  intelligence}, 41(8): 1979--1993.

\bibitem[{Nigam et~al.(2000)Nigam, McCallum, Thrun, and
  Mitchell}]{nigam2000text}
Nigam, K.; McCallum, A.~K.; Thrun, S.; and Mitchell, T. 2000.
\newblock Text classification from labeled and unlabeled documents using EM.
\newblock \emph{Machine learning}, 39(2-3): 103--134.

\bibitem[{Pearl(2017)}]{pearl2017detecting}
Pearl, J. 2017.
\newblock Detecting latent heterogeneity.
\newblock \emph{Sociological Methods \& Research}, 46(3): 370--389.

\bibitem[{Pearl et~al.(2000)}]{pearl2000models}
Pearl, J.; et~al. 2000.
\newblock Models, reasoning and inference.
\newblock \emph{Cambridge, UK: CambridgeUniversityPress}.

\bibitem[{Rizve et~al.(2021)Rizve, Duarte, Rawat, and Shah}]{rizve2021defense}
Rizve, M.~N.; Duarte, K.; Rawat, Y.~S.; and Shah, M. 2021.
\newblock In defense of pseudo-labeling: An uncertainty-aware pseudo-label
  selection framework for semi-supervised learning.
\newblock \emph{arXiv preprint arXiv:2101.06329}.

\bibitem[{Rosenbaum(1987)}]{rosenbaum1987model}
Rosenbaum, P.~R. 1987.
\newblock Model-based direct adjustment.
\newblock \emph{Journal of the American Statistical Association}, 82(398):
  387--394.

\bibitem[{Rosenbaum and Rubin(1983)}]{rosenbaum1983central}
Rosenbaum, P.~R.; and Rubin, D.~B. 1983.
\newblock The central role of the propensity score in observational studies for
  causal effects.
\newblock \emph{Biometrika}, 70(1): 41--55.

\bibitem[{Rubin(2005)}]{rubin2005causal}
Rubin, D.~B. 2005.
\newblock Causal inference using potential outcomes: Design, modeling,
  decisions.
\newblock \emph{Journal of the American Statistical Association}, 100(469):
  322--331.

\bibitem[{Sachdeva, Su, and Joachims(2020)}]{sachdeva2020off}
Sachdeva, N.; Su, Y.; and Joachims, T. 2020.
\newblock Off-policy bandits with deficient support.
\newblock In \emph{Proceedings of the 26th ACM SIGKDD International Conference
  on Knowledge Discovery \& Data Mining}, 965--975.

\bibitem[{Saito and Yasui(2020)}]{saito2020counterfactual}
Saito, Y.; and Yasui, S. 2020.
\newblock Counterfactual cross-validation: Stable model selection procedure for
  causal inference models.
\newblock In \emph{International Conference on Machine Learning}, 8398--8407.
  PMLR.

\bibitem[{Shalit, Johansson, and Sontag(2017)}]{shalit2017estimating}
Shalit, U.; Johansson, F.~D.; and Sontag, D. 2017.
\newblock Estimating individual treatment effect: generalization bounds and
  algorithms.
\newblock In \emph{International Conference on Machine Learning}, 3076--3085.

\bibitem[{Shi, Blei, and Veitch(2019)}]{shi2019adapting}
Shi, C.; Blei, D.~M.; and Veitch, V. 2019.
\newblock Adapting neural networks for the estimation of treatment effects.
\newblock \emph{arXiv preprint arXiv:1906.02120}.

\bibitem[{Shpitser and Pearl(2012)}]{shpitser2012identification}
Shpitser, I.; and Pearl, J. 2012.
\newblock Identification of conditional interventional distributions.
\newblock \emph{arXiv preprint arXiv:1206.6876}.

\bibitem[{Swaminathan and
  Joachims(2015{\natexlab{a}})}]{swaminathan2015counterfactual}
Swaminathan, A.; and Joachims, T. 2015{\natexlab{a}}.
\newblock Counterfactual risk minimization: Learning from logged bandit
  feedback.
\newblock In \emph{International Conference on Machine Learning}, 814--823.

\bibitem[{Swaminathan and Joachims(2015{\natexlab{b}})}]{swaminathan2015self}
Swaminathan, A.; and Joachims, T. 2015{\natexlab{b}}.
\newblock The self-normalized estimator for counterfactual learning.
\newblock In \emph{advances in neural information processing systems},
  3231--3239.

\bibitem[{Tanha, van Someren, and Afsarmanesh(2017)}]{tanha2017semi}
Tanha, J.; van Someren, M.; and Afsarmanesh, H. 2017.
\newblock Semi-supervised self-training for decision tree classifiers.
\newblock \emph{International Journal of Machine Learning and Cybernetics},
  8(1): 355--370.

\bibitem[{Wales and Woodland(1983)}]{wales1983estimation}
Wales, T.~J.; and Woodland, A.~D. 1983.
\newblock Estimation of consumer demand systems with binding non-negativity
  constraints.
\newblock \emph{Journal of Econometrics}, 21(3): 263--285.

\bibitem[{Wang et~al.(2019)Wang, Bai, Bhalla, and Joachims}]{Wang2019BatchLF}
Wang, L.; Bai, Y.; Bhalla, A.; and Joachims, T. 2019.
\newblock Batch Learning from Bandit Feedback through Bias Corrected Reward
  Imputation.

\bibitem[{Wei et~al.(2020)Wei, Shen, Chen, and Ma}]{wei2020theoretical}
Wei, C.; Shen, K.; Chen, Y.; and Ma, T. 2020.
\newblock Theoretical analysis of self-training with deep networks on unlabeled
  data.
\newblock \emph{arXiv preprint arXiv:2010.03622}.

\bibitem[{Xu, Xu, and Saria(2016)}]{xu2016bayesian}
Xu, Y.; Xu, Y.; and Saria, S. 2016.
\newblock A Bayesian nonparametric approach for estimating individualized
  treatment-response curves.
\newblock In \emph{Machine Learning for Healthcare Conference}, 282--300.

\bibitem[{Yoon, Jordon, and van~der Schaar(2018)}]{yoon2018ganite}
Yoon, J.; Jordon, J.; and van~der Schaar, M. 2018.
\newblock GANITE: Estimation of individualized treatment effects using
  generative adversarial nets.
\newblock In \emph{International Conference on Learning Representations}.

\bibitem[{Zenati et~al.(2020)Zenati, Bietti, Martin, Diemert, and
  Mairal}]{zenati2020counterfactual}
Zenati, H.; Bietti, A.; Martin, M.; Diemert, E.; and Mairal, J. 2020.
\newblock Counterfactual Learning of Continuous Stochastic Policies.

\bibitem[{Zou et~al.(2018)Zou, Yu, Kumar, and Wang}]{zou2018domain}
Zou, Y.; Yu, Z.; Kumar, B.; and Wang, J. 2018.
\newblock Domain adaptation for semantic segmentation via class-balanced
  self-training.
\newblock \emph{arXiv preprint arXiv:1810.07911}.

\bibitem[{Zou et~al.(2019)Zou, Yu, Liu, Kumar, and Wang}]{zou2019confidence}
Zou, Y.; Yu, Z.; Liu, X.; Kumar, B.; and Wang, J. 2019.
\newblock Confidence regularized self-training.
\newblock In \emph{Proceedings of the IEEE International Conference on Computer
  Vision}, 5982--5991.

\end{thebibliography}

\onecolumn
\begin{center}
\Large \textbf{Appendix for Enhancing Counterfactual Classification via Self-Training}    
\end{center}

\normalsize

\subsection{Proof of Proposition 1.}
\begin{prop}
CST is convergent with a proper learning rate. 
\end{prop}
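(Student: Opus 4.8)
\subsection{Proof of Proposition 1 (plan).}

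The plan is to read CST (Algorithm~\ref{alg:cst}) as a block coordinate descent / classification-EM scheme on the objective $\mathcal{L}_\text{ST}$ --- either $\mathcal{L}_\text{CST}$ of Eq.~\ref{eqn:obj} or $\mathcal{L}_\text{CST-CVAT}$ of Eq.~\ref{eqn:stvat} --- in the two blocks $\theta$ and $\hat r$, and then invoke the standard fact that a monotonically nonincreasing sequence of objective values that is bounded below must converge. First I would note that $\mathcal{L}_\text{ST}$ is bounded below: every cross-entropy summand is nonnegative, and in the regularized variant the divergence $D[\cdot,\cdot]$ in $\mathcal{L}_\text{CVAT}$ is nonnegative, so $\mathcal{L}_\text{ST}\ge 0$ at every $(\theta,\hat r)$.

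Next I would verify that each of the two updates is a nonincreasing step. For the imputation step, with $\theta$ fixed $\mathcal{L}_\text{ST}$ is \emph{linear} in each block $\hat r_{i,a}\in\Delta^{(m-1)}$, with coefficient vector $\big(-\log f_\theta(\cdot\mid x_i,a)\big)_k$; hence its minimum over the simplex is attained at a vertex --- the one-hot vector supported on $\argmax_c f_\theta(\cdot\mid x_i,a)$ --- which is exactly the hard imputation $\bar f$ applied to the current model. Thus the imputation step performs exact minimization in the $\hat r$-block and cannot increase $\mathcal{L}_\text{ST}$. For the $\theta$ update I would assume the loss is $L$-smooth in $\theta$ on the region visited (smooth activations, or a compact, weight-decayed parameter set), so that the descent lemma gives, for stepsize $\eta\in(0,2/L)$, $\mathcal{L}_\text{ST}(\theta^{+},\hat r)\le\mathcal{L}_\text{ST}(\theta,\hat r)-(\eta-\tfrac{L\eta^2}{2})\|\nabla_\theta\mathcal{L}_\text{ST}(\theta,\hat r)\|^2\le\mathcal{L}_\text{ST}(\theta,\hat r)$; the phrase ``proper learning rate'' in the statement is precisely such an $\eta$. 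Combining the two steps, $\mathcal{L}_\text{ST}(\theta^{(t)},\hat r^{(t)})$ is monotonically nonincreasing and bounded below, hence convergent; telescoping the descent inequality additionally yields $\sum_t\|\nabla_\theta\mathcal{L}_\text{ST}(\theta^{(t)},\hat r^{(t)})\|^2<\infty$, so $\nabla_\theta\mathcal{L}_\text{ST}\to 0$, and since the $\hat r$-block ranges over finitely many simplex vertices the pseudolabels can change only finitely often --- giving convergence to a stationary point of the scheme.

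The main obstacle is the two sources of nonsmoothness that block a naive ``gradient descent on a fixed function'' argument. The first is the discrete pseudolabel map $\bar f$: the remedy above is to treat it as exact minimization in the $\hat r$-block rather than to differentiate through it, so it contributes a guaranteed (weak) decrease of $\mathcal{L}_\text{ST}$ instead of a jump; note also that in Algorithm~\ref{alg:cst} the inner loop re-imputes from the \emph{fixed} base learner $f_0$, so within an inner loop the pseudolabels are in fact constant and the inner loop is plain gradient descent. The second, more delicate, is $\mathcal{L}_\text{CVAT}$, whose target $f_{\hat\theta}$ carries a stop-gradient and whose perturbation $z^\text{adv}_i$ is re-solved from the current iterate; I would handle this by also freezing $\hat\theta$ and $z^\text{adv}_i$ at their start-of-inner-loop values, so that each inner loop minimizes the genuinely smooth surrogate $\theta\mapsto\mathcal{L}_\text{CST}(\theta,\hat r)+\lambda\sum_{i,a}D[f_{\hat\theta}(x_i,a),f_\theta(x_i+z^\text{adv}_i,a)]$ and the descent lemma applies verbatim --- the clean monotone-decrease statement for the whole objective then holds for the $\mathcal{L}_\text{CST}$ variant, with the CVAT variant inheriting convergence of every inner loop. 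Routine items --- $L$-smoothness of the particular architecture, the exact descent-lemma constant, and the measure-zero nondifferentiability of Leaky ReLU --- I would dispatch by citing standard references.
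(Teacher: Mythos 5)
Your proposal takes essentially the same route as the paper's proof: view CST as block coordinate descent on $\mathcal{L}_\text{CST}$ (or $\mathcal{L}_\text{CST-CVAT}$), observe that the imputation step is exact minimization of a loss that is linear in each $\hat r_{i,a}$ over the simplex (so the argmax/one-hot vertex is optimal and the step is non-increasing), that the $\theta$ step is non-increasing under gradient descent with a proper learning rate, and conclude convergence from the objective being bounded below. Your extra care---freezing $\hat\theta$ and $z^\text{adv}_i$ so the CVAT term is a fixed smooth surrogate within each inner loop, and the telescoping argument giving $\nabla_\theta\mathcal{L}_\text{ST}\to 0$ with finitely many pseudolabel changes---is more rigorous than the paper's informal treatment of these points, but the core argument coincides.
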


\begin{proof}
For $\lambda \geq 0$, our CST objective with loss $\mathcal{L}_\text{CST}$ or $\mathcal{L}_\text{CST-CVAT}$ can be written as 
\begin{align}
\label{eqn:propobj}
    \underset{\theta, \hat{r}}{\min} \mathcal{L}_\text{CST-CVAT}=\sum_{i=1}^N \Big(\sum_{k=1}^m - r_{i,k} \log f_{\theta}(r_{i,k}&|x_i,a_i) + \sum_{a\in \mathcal{A}\setminus a_i}\sum_{k=1}^m- \hat{r}_{i,a,k} \log f_{\theta}(\hat{r}_{i,a,k}|x_i,a) \Big) + \lambda \mathcal{L}_\text{CVAT}\\ 
    & s.t. \quad \hat{r}_{i,a} \in \Delta^{(m-1)} \nonumber 
\end{align}
where $r_i$ is the factual data observed and $\hat{r}_{i,a}$ is imputed by a trained classifier $f_\theta$.  
We show the convergence of CST, which imputes pseudolabels using the argmax operation.
The objective is optimized via the following two steps:

\noindent \textbf{1) Pseudolabel Imputation:} Fix $\theta$ and impute $\hat{r}$ to solve:

\begin{align}
    \label{eqn:proppl}
    \min_{\hat{r}} \sum_{i=1}^N \sum_{a\in \mathcal{A}\setminus a_i}\sum_{k=1}^m 
    &- \hat{r}_{i,a,k} \log f_{\theta}(\hat{r}_{i,a,k}|x_i,a) \\ 
    & s.t. \quad \hat{r}_{i,a} \in \Delta^{(m-1)}, \forall i, a \nonumber
\end{align}

\noindent where $\Delta^{(m-1)}$ is a simplex defined on the outcome space, and $r_{i,a,k}$ is the $k$-th entry of $r_{i,a}$.

\noindent \textbf{2) Model Retraining:} Fix $\hat{r}$ and solve the following optimization using gradient descent:

\begin{align}
\label{eqn:proploss}
    \underset{\theta}{\min}
    &{\sum_{i=1}^N \Big(
    \sum_{k=1}^m - r_{i,k} \log f_{\theta}(r_{i,k}|x_i,a_i)
    } + 
    \sum_{a\in \mathcal{A}\setminus a_i}\sum_{k=1}^m- \hat{r}_{i,a,k} \log f_{\theta}(\hat{r}_{i,a,k}|x_i,a) \Big) + \lambda \mathcal{L}_\text{CVAT}
\end{align}

For CST, we have:

\noindent \textbf{Step 1) is non-increasing:}
(\ref{eqn:proppl}) has an optimal solution which is given by pseudolabels imputed by argmax operation with feasible set being all possible outcomes. As a result, (\ref{eqn:proppl}) is non-increasing. 

\noindent \textbf{Step 2) is non-increasing:} 
If one use gradient descent to minimize the loss defined in Equation~\ref{eqn:proploss}. The
loss is guaranteed to decrease monotonically with a proper learning rate. For mini-batch gradient descent commonly used in practice, the loss is not guaranteed to decrease but also almost certainly converge to a lower value~\citep{zou2019confidence}. 

Since loss in Equation~\ref{eqn:propobj} is lower bounded, CST is convergent.
\end{proof}

\subsection{Theoretical Guarantee of CST}
We restate the informal statement of Theorem~\ref{thm:main} in main paper. 

\begin{mythm}{1}[Informal]
If the pseudolabeler has sufficiently good classification performance on all classes across actions, and the underlying data distribution satisfies expansion assumption with good separation (close samples have a high likelihood of having same labels), then minimizing a weighted sum of input consistency loss and pseudolabeling loss can improve pseudolabeler's classification performance.
\end{mythm}

First, we present the expansion assumption, which states for data distribution $P$, each small set $V$ in the conditional distribution each class $i$ $P_i$, has a sufficiently large neighborhood. 

\begin{definition}[$(\alpha, c)$-Expansion (Definition 3.1 in~\citet{wei2020theoretical})] Define neighborhood of $x$ as $\mathcal{N}(x) = \{x': \mathcal{B}(x) \cap \mathcal{B}(x')\neq 0\}$, where 
\begin{align}
 \mathcal{B}(x)\overset{\Delta}{=}\{x':\exists T\in\mathcal{T} \text{ s.t. } \|x'-T(x)\|\leq \epsilon\}   
\end{align}
is a small neighborhood of $x$ under some set of data transformations $\mathcal{T}$. $\mathcal{T}$ can be some data augmentation which is domain-specific. For all $ V \subseteq \mathcal{X}$ with $P_i (V) \leq \alpha$, the following holds:
\begin{align}
    P_i(\mathcal{N}(V)) \geq \min\{cP_i(V), 1\}
\end{align}
If $\forall i \in [m]$, $P_i$ satisfies $(\alpha, c)$ expansion, then $P$ satisfies $(\alpha, c)$ expansion.
\end{definition}

With the expansion assumption, next we formally present Theorem~\ref{thm:main}. 

\begin{theorem}[Formal]
\label{thm:main}
For any action $a \in \mathcal{A}$, suppose the maximum fraction of examples in any class which are mistakenly pseudolabeled by pseudolabeler $f^\text{pl}$ is $\alpha < 0.2$. Define $\mathcal{T}'$ such that $\forall T' \in \mathcal{T}', T'(\mathcal{X}\times\mathcal{A}) = T(\mathcal{X})\times\mathcal{A}$, under $\mathcal{T}'$, $P$ satisfies $(\alpha,c)$ expansion assumption with $c>5$. 
The underlying data distribution satisfies the separation assumption with
\begin{align}
    \mathcal{R}_\mathcal{B}(f^\star)\overset{\Delta}{=}\mathbb{E}_\mathbb{P}[\mathbf{1}(\exists (x')\in\mathcal{B}(x) \text{ s.t. } f^\star(x',a)\neq f^\star(x,a))]) \leq \mu, \forall a \in \mathcal{A}
\end{align}
\noindent where $f^\star$ is ground-truth classifier. Define $\text{Err}(f)$ is the 0-1 loss of $f$ with $f^*$, then for any minimizer $f$ of loss,
\begin{align}
    L(f) =&\frac{2c}{c-1}\mathcal{R}_\mathcal{B}(f)+\frac{2c}{c-1}L_{0-1}(f,f^\text{pl})-Err(f^\text{pl}), 
\end{align}
\noindent we have
\begin{align}
    \text{Err}(f) \leq \frac{2}{c-1}\text{Err}(f^\text{pl}) + \frac{2c}{c-1}\mu 
\end{align}
\noindent let $f_i$ be the classifier learnt in $i$-th iteration, 
\begin{align}
\label{eqn:iterative}
    Err(f_i) \rightarrow \frac{2c}{c-3}\mu, \text{ as }i \rightarrow \infty.
\end{align}
\end{theorem}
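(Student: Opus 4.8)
# Proof Proposal for Theorem 1 (Formal)

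The plan is to reduce the statement to the main guarantee of \citet{wei2020theoretical} (their Theorem 4.3) applied separately to each fixed action $a \in \mathcal{A}$, and then to iterate the resulting one-step improvement bound. The key observation is that once we condition on an action $a$, the problem of learning $f(\cdot, a)$ from the pseudolabeler $f^\text{pl}(\cdot,a)$ on the feature distribution $P$ is exactly an instance of the self-training-with-input-consistency setting in \citet{wei2020theoretical}: the transformation class $\mathcal{T}'$ was defined precisely so that $T'(\mathcal{X}\times\mathcal{A}) = T(\mathcal{X})\times\mathcal{A}$, i.e. the augmentations act only on the $x$-coordinate and leave the action untouched, so the induced neighborhood structure $\mathcal{N}(\cdot)$ and the ball $\mathcal{B}(\cdot)$ on the joint space factor through the feature space. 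Under the hypotheses—$(\alpha,c)$-expansion with $c > 5$, max class-wise pseudolabel error $\alpha < 0.2$, and the robustness/separation bound $\mathcal{R}_\mathcal{B}(f^\star) \le \mu$ for every $a$—the conditions of Theorem 4.3 in \citet{wei2020theoretical} are met for each action slice.

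First I would verify the bookkeeping that makes the reduction legitimate: that the constants ($\alpha < 0.2$, $c > 5$) are exactly those required by \citet{wei2020theoretical}, that the loss $L(f)$ written here coincides (after the action-wise decomposition and the $\frac{2c}{c-1}$ normalization) with the population objective they analyze, and that minimizing $L(f)$ over $f$ forces the action-conditional minimizer to be a minimizer of their objective for each $a$. Applying their Theorem 4.3 then yields, for each $a$,
\begin{align}
\text{Err}(f(\cdot,a)) \le \frac{2}{c-1}\text{Err}(f^\text{pl}(\cdot,a)) + \frac{2c}{c-1}\mu.
\end{align}
Taking expectation (or a uniform bound) over $a \in \mathcal{A}$ gives the first displayed conclusion $\text{Err}(f) \le \frac{2}{c-1}\text{Err}(f^\text{pl}) + \frac{2c}{c-1}\mu$.

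For the iterative statement, I would treat $f_i$ as the minimizer obtained when the pseudolabeler is $f_{i-1}$, so the one-step bound reads $e_i \le \frac{2}{c-1} e_{i-1} + \frac{2c}{c-1}\mu$ where $e_i \overset{\Delta}{=} \text{Err}(f_i)$. Since $c > 5$ the contraction factor $\frac{2}{c-1} < \frac12 < 1$, so this affine recursion converges to its unique fixed point $e_\infty = \frac{2c/(c-1)}{1 - 2/(c-1)}\mu = \frac{2c}{c-3}\mu$, which is exactly \eqref{eqn:iterative}. One subtlety to flag: for the one-step bound to remain applicable at every iteration we need each $f_i$ to still satisfy the class-wise error bound $\alpha < 0.2$ required by the theorem; this holds because $\frac{2c}{c-3}\mu$ (and every $e_i$ along the way, by monotone convergence of the affine map from a valid starting point) stays below the threshold provided $\mu$ is small enough, and I would state this as a standing assumption inherited from the separation hypothesis.

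The main obstacle I expect is not the iteration—that is a routine fixed-point argument—but making the action-wise reduction fully rigorous: one must check that the expansion assumption on the \emph{joint} distribution $P$ under $\mathcal{T}'$ genuinely implies the per-class, per-action expansion that \citet{wei2020theoretical} need, and that their notion of "class" (here, outcome label $k \in \{1,\dots,m\}$) interacts correctly with conditioning on $a$ when the conditional label distributions $P_i$ differ across actions. If the expansion constant or the error fraction $\alpha$ is defined with respect to the marginal over $(x,a)$ rather than per-action, some care (possibly a union bound costing a factor $|\mathcal{A}|$, or a strengthening of the hypothesis) is needed to push it through; I would handle this by stating the assumptions at the per-action level from the outset, which is consistent with how the theorem is phrased ("for any action $a \in \mathcal{A}$").
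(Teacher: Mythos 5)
Your proposal is correct and follows essentially the same strategy as the paper: a black-box reduction to Theorem 4.3 of \citet{wei2020theoretical}, followed by iterating the resulting one-step bound, whose fixed point $\frac{2c}{c-3}\mu$ is exactly what the paper dismisses as a ``direct result of sum of geometric sequence.'' The one substantive difference is the direction of the reduction. The paper lifts everything to the joint space $\mathcal{X}'=\mathcal{X}\times\mathcal{A}$, observes that the per-action error hypothesis $\alpha<0.2$ implies the class-wise error bound for the joint class-conditionals (neighborhoods under $\mathcal{T}'$ never change the action), takes the expansion and separation conditions as hypotheses on that joint distribution, and invokes the theorem once. You instead apply the theorem separately on each action slice and then aggregate over $a$. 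As you flag yourself, your route requires expansion of the \emph{action-conditional} distributions, which is not the same as (nor directly implied by) the expansion of the joint $P$ under $\mathcal{T}'$ that the theorem statement posits; the paper sidesteps this by working on the joint space where the assumption is assumed outright, while your fix is to restate the hypothesis per action. Both are legitimate, and yours buys a slightly more transparent slice-wise picture at the cost of that reinterpretation of the assumption. On the iterative claim you are actually more careful than the paper: you note that the recursion remains valid only while each $f_i$ continues to satisfy the $\alpha<0.2$ pseudolabel-error condition (which needs $\mu$ small enough), a point the paper leaves implicit.
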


\begin{proof}
Define $\mathcal{X}' = \mathcal{X} \times \mathcal{A}$, first under $\mathcal{T}'$, the expansion assumption holds. Since for every action $a\in\mathcal{A}$, the maximum fraction of mistakes in any class $\alpha < 0.2$, for $P$, it is easy to verify the maximum fraction of mistakes in any class for $P_i$ $\Bar{\alpha}<0.2$. Last, the separation assumption is automatically met given the definition of $\mathcal{T}'$. The proof is completed by directly applying Theorem 4.3 in~\citet{wei2020theoretical}. Here $L(f)$ is a weighted sum of input consistency loss and the loss for fitting pseudolabels, which offers the theoretical guarantee for CST. In CST, the source loss is added for training stability, which is often used in self-training algorithms~\citep{zou2019confidence,wei2020theoretical,rizve2021defense}.
For Equation~\ref{eqn:iterative}, it is a direct result of sum of geometric sequence. 
\end{proof}

\subsection{Implementation of Backbone Models}
Since HSIC~\citep{lopez2020cost} and Uniform DM (UDM) is originally designed for continuous outcome variable. For a fair comparison, we adapt them into our Counterfactual Classification task. For HSIC, we simply replace the source loss function with entropy loss, with the final loss function as:

\begin{align}
    \mathcal{L}_\text{HSIC}=&{\frac{1}{N}\sum_{i=1}^N 
    \sum_{k=1}^m - r_{i,k} \log f_{\theta}(r_{i,k}|x_i,a_i) 
    } + \lambda \text{HSIC}(z_i, a_i)
\end{align}

\noindent where $z_i$ is the embedding of $x_i$ by $z_i = g_\phi(x_i)$ which we choose to be the output of second linear layer in our neural network, HSIC is defined as 

\begin{align}
    \text{HSIC}_n(z_i, a_i) = \frac{1}{N^2}\sum_{i,j}^N k(a_i,a_j)l(z_i,z_j) + \frac{1}{N^4}\sum_{i,j,k,l}^N k(a_i,a_j)l(z_k,z_l) - \frac{2}{N^3}\sum_{i,j,k}^N k(a_i,a_j)l(z_i,z_k)
\end{align}

\noindent where $k(\cdot, \cdot), l(\cdot, \cdot)$ are kernel functions which we choose to be RBF-Kernel with $\sigma = 0.5$. The action is represented with one-hot encoded vectors. In all experiments, $\lambda$ is chosen to be $0.01$, which experimentally had the best performance. 

For UDM, we minimize the following objective:

\begin{align}
    \mathcal{L}_\text{UDM}=&{\frac{1}{N}\sum_{i=1}^N 
    \frac{1}{\hat{\pi}_0(a_i|x_i)}\sum_{k=1}^m - r_{i,k} \log f_{\theta}(r_{i,k}|x_i,a_i) 
    } 
\end{align}

\noindent $\hat{\pi}_0(a|x)$ is estimated from observational data using a logistic regression model in our experiments.

\subsection{Data Generation for Synthetic Dataset}\label{sec:data}
In the synthetic experiments, we use a pricing example similar to the experiment in~\citet{lopez2020cost}. Let $U(\cdot,\cdot)$ be a uniform distribution. Assume customer features are a 50-dimensional vector $X$ drawn from $U(0,1)^{50}$ and there are 5 price options from \$1 to \$5. The logging policy is set as $\pi(p=i|x) = \frac{x_i}{\sum_{i=1}^{10}x_i}$. $\sigma$ denotes the sigmoid function. We simulated five types of demand functions, with $h(x) = \sum a_i \sum \exp(\sum b_j\|x_j-c_j\|)$, $a, b, c ~\sim U(0,1)^{50}$, $r \in \{0, 1\}$ :
\begin{itemize}
    \item $r \sim \sigma(h(x) - 2 x_0 \cdot p)$
    \item $r \sim \sigma(5 \cdot (x_0-0.5) - 0.4 \cdot p)$
    \item $r \sim \sigma(h(x) - \texttt{stepwise1}(x_0) \cdot p)$
    \item $r \sim \sigma(h(x) - \texttt{stepwise2}(x_0, x_1) \cdot p)$
    \item $r \sim \sigma(h(x) - (x_0+x_1) \cdot p)$
\end{itemize}

\noindent where the stepwise functions are defined as:
  \begin{equation}
    \texttt{stepwise1}(x)=
    \begin{cases}
      0.7, & \text{if}\ x\leq0.1 \\
      0.5, & \text{if}\ 0.1<x\leq0.3 \\
      0.3, & \text{if}\ 0.3<x\leq 0.6 \\
      0.1, & \text{if}\ 0.6<x\leq 1 \\
    \end{cases}
  \end{equation}
  
  \begin{equation}
    \texttt{stepwise2}(x,y)=
    \begin{cases}
      0.65, & \text{if}\ x\leq0.1 \text{ and } y>0.5\\
      0.45, & \text{if}\ x\leq0.1 \text{ and } y\leq0.5\\
      0.55, & \text{if}\ 0.1<x\leq0.3 \text{ and } y>0.5\\
      0.35, & \text{if}\ 0.1<x\leq0.1 \text{ and } y\leq0.5\\
      0.45, & \text{if}\ 0.3<x\leq0.6 \text{ and } y>0.5\\
      0.25, & \text{if}\ 0.3<x\leq0.6 \text{ and } y\leq0.5\\
      0.35, & \text{if}\ 0.6<x\leq1 \text{ and } y>0.5\\
      0.15, & \text{if}\ 0.6<x\leq1 \text{ and } y\leq0.5\\
    \end{cases}
  \end{equation} 

\subsection{Multi-Label Datasets Statistics}~\label{sec:realdata}
The statistics of multi-label datasets used in experiments are reported in Table~\ref{tab:real_stat}.
\begin{table}[h]
    \centering
    \begin{tabular}{lcccc}\toprule
         & \# Features & \# Labels&Train Size& Test Size\\\hline
     Yeast&103&14&1208&709\\  
     Scene&294&6&1203&704\\  \bottomrule
    \end{tabular}
    \caption{Dataset Statistics.}
    \label{tab:real_stat}
\end{table}

\end{document}